\DeclareMathOperator*{\argmin}{arg\,min}
\definecolor{blue}{rgb}{0,0,1}
\definecolor{orange}{rgb}{1,0.5,0.1}
\definecolor{brightblue}{rgb}{0.92,0.92,1}
\definecolor{brightgrey}{rgb}{0.96,0.96,1}
\definecolor{green}{rgb}{0.2,1.0,0.2}
\lstdefinestyle{ASP}{
  belowcaptionskip=1\baselineskip,
  breaklines=true,
  frame=L,
  xleftmargin=\parindent,
  language=psl,
  showstringspaces=false,
  basicstyle=\script\ttfamily,
  keywordstyle=\bfseries\color{green!40!black},
  commentstyle=\itshape\color{purple!40!black},
  identifierstyle=\color{blue},
  stringstyle=\color{orange},
}
\newcommand{\CI}{\mathrel{\perp\mspace{-10mu}\perp}}
\newcommand{\nCI}{\centernot{\CI}}
\newtheorem{lemma}{Lemma}
\newcommand{\name}{ACI}
\newcommand\idcausal{\dashrightarrow}
\newcommand\idacausal{\text{${}\not\!\dashrightarrow{}$}}
\newcommand\B[1]{\bm{#1}}
\newcommand\given{\,|\,}
\newcommand\causes{\idcausal}
\newcommand\notcauses{\idacausal}
\newcommand{\xto}[1]{\stackrel{#1}{\to}}
\newcommand\eref[1]{(\ref{#1})}
\newtheorem{theorem}{Theorem}
\newcommand\Prb{\mathbb{P}}
\newcommand{\ck}{\checkmark}
\definecolor{lgray}{rgb}{0.9,0.9,0.9}
\newcommand\loss{\mathcal{L}}
\DeclareMathOperator*{\DAG}{DAG}
\title{Ancestral Causal Inference}
\author{
  Sara Magliacane\\
  VU Amsterdam \& University of Amsterdam\\
  \texttt{sara.magliacane@gmail.com}
  \And
  Tom Claassen\\
  Radboud University Nijmegen\\
  \texttt{tomc@cs.ru.nl} 
    \And
  Joris M. Mooij\\
  University of Amsterdam\\
  \texttt{j.m.mooij@uva.nl}   
}
\begin{document}

\maketitle

\begin{abstract}%
Constraint-based causal discovery from limited data is a notoriously difficult challenge due to the many borderline independence test decisions. 
Several approaches to improve the reliability of the predictions
by exploiting redundancy in the independence information have been proposed recently. 
Though promising, existing approaches can still be greatly improved in terms of accuracy and scalability.
We present a novel method that reduces the combinatorial
  explosion of the search space by using a more coarse-grained representation of causal information, drastically reducing computation time.
Additionally, we propose a method to score causal predictions based on their confidence.
Crucially, our implementation also allows one to easily combine observational and interventional data and to incorporate various types of available background knowledge. 
We prove soundness and asymptotic consistency of our method and demonstrate that it can outperform the state-of-the-art on synthetic data, achieving a speedup of several orders of magnitude. We illustrate its practical feasibility by applying it to a challenging protein data set.
\end{abstract}

\section{Introduction}
Discovering causal relations from data is at the foundation of the scientific method. Traditionally, cause-effect relations have been recovered from experimental data in which the variable of interest is perturbed, but seminal work like the \textit{do}-calculus \cite{Pearl2009} and the PC/FCI algorithms \cite{Spirtes2000,Zhang:2008:COR:1414091.1414237} demonstrate that, under certain assumptions (e.g., the well-known \emph{Causal Markov} and \emph{Faithfulness} assumptions \cite{Spirtes2000}), it is already possible to obtain substantial causal information by using only observational data. 

Recently, there have been several proposals for combining observational and experimental data to discover causal relations. These causal discovery methods are usually divided into two categories: constraint-based and score-based methods. Score-based methods typically evaluate models using a penalized likelihood score, while constraint-based methods use statistical independences to express constraints over possible causal models. The advantages of constraint-based over score-based methods are the ability to handle latent confounders and selection bias naturally, and that there is no need for parametric modeling assumptions. Additionally, constraint-based methods expressed in \emph{logic} \cite{loci,BCCD,triantafillou2015constraint,antti} allow for an easy integration of background knowledge, which is not trivial even for simple cases in approaches that are not based on logic \cite{borboudakis2012}. 

Two major disadvantages of traditional constraint-based methods are: (i) vulnerability to errors in statistical independence test results, which are quite common in real-world applications, (ii) no ranking or estimation of the confidence in the causal predictions.
Several approaches address the first issue and improve the reliability of constraint-based methods by exploiting redundancy in the independence information \cite{BCCD,antti,triantafillou2015constraint}. 
The idea is to assign weights to the input statements that reflect their reliability, and then use a reasoning scheme that takes these weights into account. 
Several weighting schemes can be defined, from simple ways to attach weights to single independence statements \cite{antti}, to more complicated schemes to obtain weights for combinations of independence statements \cite{triantafillou2015constraint,BCCD}.
Unfortunately, these approaches have to sacrifice either accuracy by using a greedy method \cite{BCCD,triantafillou2015constraint}, or scalability by formulating a discrete optimization problem on a super-exponentially large search space \cite{antti}. 
Additionally, the confidence estimation issue is addressed only in limited cases \cite{ICP}.

We propose Ancestral Causal Inference (ACI), a logic-based method that provides comparable accuracy to the best state-of-the-art constraint-based methods (e.g., \cite{antti}) for causal systems with latent variables without feedback, but improves on their scalability by using a more coarse-grained representation of causal information.  Instead of representing all possible direct causal relations, in ACI we represent and reason only with ancestral relations (``indirect'' causal relations), developing specialised ancestral reasoning rules. This representation, though still super-exponentially large, drastically reduces computation time. Moreover, it turns out to be very convenient, because in real-world applications the distinction between direct causal relations and ancestral relations is not always clear or necessary. Given the estimated ancestral relations, the estimation can be refined to direct causal relations by constraining standard methods to a smaller search space, if necessary.

Furthermore, we propose a method to score predictions according to their confidence. The confidence score can be thought of as an approximation to the marginal probability of an ancestral relation. Scoring predictions enables one to rank them according to their reliability, allowing for higher accuracy. This
is very important for practical applications, as the low reliability of the predictions of constraint-based methods has been a major impediment to their wide-spread use.%

We prove soundness and asymptotic consistency under mild conditions on the statistical tests for ACI and our scoring method. 
We show that ACI outperforms standard methods, like bootstrapped FCI and CFCI, in terms of accuracy, and achieves a speedup of several orders of magnitude over \cite{antti} on a synthetic dataset.
We illustrate its practical feasibility by applying it to a challenging protein data set \cite{SPP05} that so far had only been addressed with score-based methods and observe that it successfully recovers from faithfulness violations. 
In this context, we showcase the flexibility of logic-based approaches by introducing weighted ancestral relation constraints that we obtain from a combination of observational and interventional data, and show that they substantially increase the reliability of the predictions. 
Finally, we provide an open-source version of our algorithms and the evaluation framework, which can be easily extended, at \url{http://github.com/caus-am/aci}.

\section{Preliminaries and related work} \label{bckg}

\paragraph{Preliminaries}
We assume that the data generating process can be modeled by a causal Directed Acyclic Graph (DAG) that may contain latent variables. For simplicity we also assume that there is no selection bias. Finally, we assume that the \emph{Causal Markov Assumption} and the \emph{Causal Faithfulness
Assumption} \cite{Spirtes2000} both hold.  In other words, the conditional
independences in the observational distribution correspond one-to-one
with the d-separations in the causal DAG. 
Throughout the paper we represent variables with uppercase letters, while sets of variables are denoted by boldface. All proofs are provided in the Supplementary Material.

A directed edge $X\to Y$ in the causal DAG represents a \emph{direct causal relation} between cause $X$ on effect $Y$. 
Intuitively, in this framework this indicates that manipulating $X$ will produce a change in $Y$, while manipulating $Y$ will have no effect on $X$. A more detailed discussion can be found in \cite{Spirtes2000}.
A sequence of directed edges $X_1 \to X_2 \to \dots \to X_n$ 
is a \emph{directed path}. If there exists a directed path from $X$ to $Y$ (or $X=Y$), then $X$ is an \emph{ancestor} of $Y$ (denoted as $X \causes Y$). 
Otherwise, $X$ is not an ancestor of $Y$
(denoted as $X \notcauses Y$). 
For a set of variables $\B{W}$, we write:
\begin{equation}
\begin{split}
X \causes \B{W} := \exists Y \in \B{W} : X \causes Y, \\
X \notcauses \B{W} \ := \forall Y \in \B{W} : X \notcauses Y.
\end{split}
\end{equation}
We define an \emph{ancestral structure} as any non-strict partial order on the observed variables of the DAG, i.e., any relation that satisfies the following axioms:
\begin{align}
  & \text{(\emph{reflexivity})}: X \causes X, \label{eq:reflexivity} \\ 
  & \text{(\emph{transitivity})}: X \causes Y \ \wedge \ Y \causes Z \implies  X \causes Z, \label{eq:transitivity} \\
  & \text{(\emph{antisymmetry})}: X \causes Y \land Y \causes X \implies X = Y. \label{eq:acyclicity}
\end{align} 
The underlying causal DAG induces a \emph{unique} ``true'' ancestral structure, which represents the transitive closure of the direct causal relations projected on the observed variables.

For disjoint sets $\B{X},\B{Y},\B{W}$ we denote conditional independence of $\B{X}$ and $\B{Y}$ given $\B{W}$ as $\B{X} \CI \B{Y} \,|\, \B{W}$, and conditional dependence as $\B{X} \nCI \B{Y} \,|\, \B{W}$. We call the cardinality $|\B{W}|$ the \emph{order} of the conditional (in)dependence relation.
Following \cite{loci} we define a \emph{minimal conditional independence} by:
$$X \CI Y \given \B{W} \cup [Z] :=  (X \CI Y \given \B{W} \cup Z) \wedge (X \nCI Y \given \B{W}),$$
and similarly, a \emph{minimal conditional dependence} by:
$$X \nCI Y \given \B{W} \cup [Z] := (X \nCI Y \given \B{W} \cup Z) \wedge (X \CI Y \given \B{W}).$$ 
The square brackets indicate that $Z$ is needed for the
(in)dependence to hold in the context of $\B{W}$.
Note that the negation of a minimal conditional independence is not a minimal conditional dependence. 
Minimal conditional (in)dependences are closely related to ancestral relations,
as pointed out in \cite{loci}:
\begin{lemma}\label{lemm:Claassen} For disjoint (sets of) variables $X, Y, Z, \B{W}$:
  \begin{align}
    X \CI Y \given \B{W} \cup [Z] & \implies Z \causes (\{X,Y\} \cup \B{W}), \label{eq:causes_rule} \\
    X \nCI Y \given \B{W} \cup [Z] & \implies Z \notcauses (\{X,Y\} \cup \B{W}). \label{eq:notcauses_rule} 
  \end{align}
\end{lemma}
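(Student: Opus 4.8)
The plan is to work entirely in terms of d-separation, which is legitimate because the Causal Markov and Faithfulness assumptions make conditional (in)dependence coincide with d-separation in the causal DAG. Both implications then reduce to understanding how a single d-connecting path changes when the node $Z$ is added to the conditioning set $\B{W}$. The one structural fact I would lean on throughout is the dichotomy: adding a node to a conditioning set can only \emph{block} a path by sitting on it as a non-collider, and can only \emph{open} a path by being (a descendant of) one of its colliders.

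For \eqref{eq:causes_rule}, assume $X \CI Y \given \B{W} \cup [Z]$, so $X \nCI Y \given \B{W}$ but $X \CI Y \given \B{W} \cup Z$. First I would fix a path $\pi$ that is active given $\B{W}$ (one exists by faithfulness). Since $\pi$ is blocked once $Z$ is added and $Z \notin \B{W}$, the dichotomy forces $Z$ to lie on $\pi$ as a non-collider; hence $Z$ has an edge pointing away from it along $\pi$, say towards the endpoint $P \in \{X,Y\}$. I would then walk along $\pi$ from $Z$ in that direction: every non-collider encountered forces the next edge to keep pointing forward, so I trace a directed path out of $Z$. This directed path must terminate either at $P$ — giving $Z \causes P$ — or at the first collider $C$ it meets. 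In the latter case $C$ lies on a path active given $\B{W}$, so $C$ has a descendant in $\B{W}$; combined with the directed path $Z \to \cdots \to C$ and transitivity, this yields $Z \causes \B{W}$. Either way $Z \causes (\{X,Y\} \cup \B{W})$.

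For \eqref{eq:notcauses_rule}, assume $X \nCI Y \given \B{W} \cup [Z]$, so $X \CI Y \given \B{W}$ but $X \nCI Y \given \B{W} \cup Z$. Fix a path $\pi$ active given $\B{W} \cup Z$. Since it is blocked given $\B{W}$, while no non-collider of $\pi$ can lie in $\B{W}$ (they already avoid the larger set $\B{W} \cup Z$), the blocking must come from a collider $C^\ast$ that has a descendant in $\B{W} \cup Z$ but none in $\B{W}$ — which forces $C^\ast \causes Z$. I would then argue by contradiction that $Z$ is an ancestor of none of $X, Y, \B{W}$. If $Z \causes W$ for some $W \in \B{W}$, then $C^\ast \causes Z \causes W$ places a descendant of $C^\ast$ in $\B{W}$, contradicting the choice of $C^\ast$. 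The substantive case is $Z \causes X$ (and symmetrically $Z \causes Y$): here I would \emph{reroute} $\pi$ into a path between $X$ and $Y$ that is active given $\B{W}$ alone, contradicting $X \CI Y \given \B{W}$.

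The reroute is where the real work lies. Choose $C^\ast$ to be, among the colliders of $\pi$ with $C^\ast \causes Z$ and no descendant in $\B{W}$, the one lying closest to $Y$; then every collider strictly between $C^\ast$ and $Y$ has a descendant in $\B{W}$, so the segment $\pi_Y$ of $\pi$ from $C^\ast$ to $Y$ is already active given $\B{W}$. On the other side I would replace the segment from $X$ to $C^\ast$ by the reverse of a shortest directed path $C^\ast \to \cdots \to X$, which exists because $C^\ast \causes Z \causes X$; being a shortest descending path out of a node with no descendant in $\B{W}$, it avoids $\B{W}$ entirely and consists only of non-colliders. Splicing this onto $\pi_Y$ turns the former collider $C^\ast$ into a non-collider (one edge now points out of it), and $C^\ast \notin \B{W}$ since it has no descendant in $\B{W}$; extracting a simple path from the resulting walk gives an $X$--$Y$ path active given $\B{W}$, the desired contradiction. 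I expect the main obstacle to be precisely this bookkeeping — checking that the spliced path stays active, i.e.\ that the detour neither passes through $\B{W}$ nor re-blocks a collider — which is exactly what dictates both the ``closest to $Y$'' choice of $C^\ast$ and the use of shortest directed paths, and which is also why the two implications are genuinely dual rather than a single symmetric argument.
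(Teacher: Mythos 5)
Your proof is correct and follows essentially the same route as the paper's own proof: for the minimal independence you identify $Z$ as a non-collider on a path d-connecting given $\B{W}$ and follow its outgoing edge until hitting $X$, $Y$, or a collider (which is an ancestor of $\B{W}$), and for the minimal dependence you locate an extremal collider that is an ancestor of $Z$ but not of $\B{W}$ and derive a contradiction with $X \CI Y \given \B{W}$ by rerouting along a directed path out of that collider --- the paper does the mirror-image construction, keeping the $X$-side segment of the path and extending it through $Z$ to $Y$. The only notable difference is that you are more explicit than the paper about the walk-to-path extraction step, which is a standard lemma.
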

Exploiting these rules (as well as others that will be introduced in Section~\ref{aci}) to deduce ancestral relations directly from (in)dependences is key to the greatly improved scalability of our method.

\paragraph{Related work on conflict resolution}

One of the earliest algorithms to deal with conflicting inputs in constraint-based causal discovery is Conservative PC
\cite{conf/uai/RamseyZS06}, which adds ``redundant'' checks to the PC algorithm that allow it to detect 
inconsistencies in the inputs, and then makes only predictions that do not rely on the ambiguous inputs. 
The same idea can be applied to FCI, yielding Conservative FCI (CFCI) \cite{Colombo++2012,pcalg}. 
BCCD (Bayesian Constraint-based Causal Discovery) \cite{BCCD} uses Bayesian confidence estimates to process information
in decreasing order of reliability, discarding contradictory inputs as they
arise.
COmbINE (Causal discovery from Overlapping INtErventions)  
\cite{triantafillou2015constraint} is an algorithm that combines 
the output of FCI on several overlapping observational and experimental datasets into a single causal model by first pooling and recalibrating the independence test $p$-values, and then adding each constraint incrementally in order of reliability to a SAT instance. Any constraint that makes the problem unsatisfiable is discarded.

Our approach is inspired by a method presented by Hyttinen, Eberhardt and J{\"{a}}rvisalo \cite{antti} (that we will refer to as HEJ in this paper), in which causal discovery is formulated as a constrained discrete minimization problem. 
Given a list of weighted independence statements, HEJ searches for the optimal causal graph $\mathcal{G}$ (an acyclic directed mixed graph, or ADMG) that minimizes the sum of the weights of the independence statements that are violated according to $\mathcal{G}$. In order to test whether a causal graph $\mathcal{G}$ induces a certain independence, the method creates an \emph{encoding DAG of d-connection graphs}.
D-connection graphs are graphs that can be obtained from a causal graph through a series of operations (conditioning, marginalization and interventions). An encoding DAG of d-connection graphs is a complex structure encoding all possible d-connection graphs and the sequence of operations that generated them from a given causal graph. This approach has been shown to correct errors in the inputs, but is computationally demanding because of the huge search space.

\section{\name: Ancestral Causal Inference}\label{aci}

We propose Ancestral Causal Inference (\name), a causal discovery method that accurately reconstructs ancestral structures, also in the presence of latent variables and statistical errors.
\name\ builds on HEJ \cite{antti}, but rather than optimizing over encoding DAGs, \name\ optimizes over the much simpler (but still very expressive) ancestral structures.

For $n$ variables, the number of possible ancestral structures is the number of partial orders (\url{http://oeis.org/A001035}), which grows as $2^{n^2/4 + o(n^2)}$ \cite{asymptoticEnumerationPosets}, while the number of DAGs can be computed with a well-known super-exponential recurrence formula (\url{http://oeis.org/A003024}). The number of ADMGs is $|\DAG(n)| \times 2^{n(n-1)/2}$. Although still super-exponential, the number of ancestral structures grows asymptotically much slower than the number of DAGs and even more so, ADMGs.
For example, for 7 variables, there are $6\times 10^6$ ancestral structures but already $2.3\times 10^{15}$ ADMGs, which lower bound the number of encoding DAGs of d-connection graphs used by HEJ.

\paragraph{New rules} The rules in HEJ explicitly encode marginalization and conditioning operations on d-connection graphs, so they cannot be easily adapted to work directly with ancestral relations. 
Instead, \name\ encodes the ancestral reasoning rules \eref{eq:reflexivity}--\eref{eq:notcauses_rule} and five novel causal reasoning rules:
\begin{lemma}\label{new_rules}
For disjoint (sets) of variables $X,Y,U,Z,\B{W}$:
  \begin{align}
    & (X \CI Y \mid \B{Z}) \wedge (X \notcauses \B{Z}) \implies X \notcauses Y, \label{eq:entner_rule} \\
    & X \nCI Y \mid \B{W} \cup [Z] \implies X \nCI Z \mid \B{W}, \\
    & X \CI Y \mid \B{W} \cup [Z] \implies X \nCI Z \mid \B{W}, \\
    & (X \CI Y \mid \B{W} \cup [Z]) \land (X \CI Z \mid \B{W} \cup U) \implies (X \CI Y \mid \B{W} \cup U), \\
    & (Z \nCI X \mid \B{W}) \land (Z \nCI Y \mid \B{W}) \land (X \CI Y \mid \B{W}) \implies X \nCI Y \mid \B{W} \cup Z. \label{eq:lastrule}
  \end{align}
\end{lemma}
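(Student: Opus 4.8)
The plan is to carry out all five implications inside the underlying causal DAG (allowing latent nodes), invoking the Causal Markov and Faithfulness assumptions to replace each conditional (in)dependence among the observed variables by the corresponding d-separation/d-connection statement, and each ancestral relation $X \causes Y$ by the existence of a directed path. Two elementary monotonicity facts would be used repeatedly: adding a single node $Z$ to the conditioning set can \emph{block} a path only at a non-collider equal to $Z$, and can \emph{activate} a path only at a collider having $Z$ among its descendants; consequently a path flips from active to blocked, or vice versa, only through one of these two mechanisms. I would expand every minimal (in)dependence $\,\cdot\cup[Z]$ into its defining conjunction before starting, so that each rule becomes a statement about surgery on active paths.

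Rule \eref{eq:entner_rule} is the cleanest: assuming $X\causes Y$ for contradiction, take a directed path $X\to\cdots\to Y$; each internal node is a descendant of $X$, hence a non-collider, and if it lay in $\B{Z}$ it would witness $X\causes\B{Z}$, contradicting $X\notcauses\B{Z}$; so no node is in $\B{Z}$, the path is active given $\B{Z}$, and $X\nCI Y\mid\B{Z}$, a contradiction. The second and third rules follow the same template in opposite directions. For the third, expand to $X\CI Y\mid\B{W}\cup Z$ and $X\nCI Y\mid\B{W}$: any path active given $\B{W}$ must be blocked by adding $Z$, forcing $Z$ onto it as a non-collider, so its prefix $X\cdots Z$ is active given $\B{W}$ and gives $X\nCI Z\mid\B{W}$. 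For the second, expand to $X\nCI Y\mid\B{W}\cup Z$ and $X\CI Y\mid\B{W}$: take a path active given $\B{W}\cup Z$; since it is blocked given $\B{W}$, that block is a collider with no $\B{W}$-descendant, so adding $Z$ activated it. Picking the collider $v$ closest to $X$ with no $\B{W}$-descendant (so $v\causes Z$), concatenate the prefix $X\cdots v$ with a directed path $v\to\cdots\to Z$: colliders strictly before $v$ retain a $\B{W}$-descendant, every appended node is a descendant of $v$ and hence outside $\B{W}$, and $v$ becomes a non-collider outside $\B{W}$; the result is active given $\B{W}$, so $X\nCI Z\mid\B{W}$.

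For rule \eref{eq:lastrule} I would concatenate an active $X$-$Z$ path and an active $Z$-$Y$ path (both given $\B{W}$), prune the resulting walk to a simple path at the node where the two halves first meet, and invoke $X\CI Y\mid\B{W}$ to force that junction to be a collider with no $\B{W}$-descendant (otherwise the simple path would already be active given $\B{W}$). Conditioning additionally on $Z$ then activates that junction — which in the generic case is $Z$ itself — while leaving both halves active, since the only new conditioning node appears solely at the junction; this yields a path active given $\B{W}\cup Z$ and hence $X\nCI Y\mid\B{W}\cup Z$. The delicate point here is the pruning that guarantees a \emph{simple} active path and locates the collider whose activation by $Z$ reconnects the two halves.

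The fourth rule is where I expect the real difficulty. Expanding the premises, I would first record, via the third rule and its proof, that $X\nCI Z\mid\B{W}$ and that \emph{every} $X$-$Y$ path active given $\B{W}$ passes through $Z$ as a non-collider; by Lemma~\ref{lemm:Claassen}, \eref{eq:causes_rule}, moreover $Z\causes\{X,Y\}\cup\B{W}$. Assuming for contradiction a path $\pi$ active given $\B{W}\cup U$ from $X$ to $Y$, I compare it with its blocked status given $\B{W}\cup Z$ (forced by $X\CI Y\mid\B{W}\cup Z$); since this is a \emph{swap} of $U$ for $Z$ rather than an enlargement, monotonicity no longer applies and I must argue directly. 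Two cases remain: if $Z$ is a non-collider on $\pi$, the prefix $X\cdots Z$ is active given $\B{W}\cup U$, contradicting $X\CI Z\mid\B{W}\cup U$. The stubborn case is the other one, where a collider on $\pi$ is kept active only by a descendant in $U$ that bypasses $Z$ entirely; resolving it is the crux of the whole lemma. I would attack it by rerouting through that descendant and exploiting the ancestral constraints — $Z\causes\{X,Y\}\cup\B{W}$ together with acyclicity forbidding the collider's $U$-descendant from also reaching $Z$ — to manufacture either an $X$-$Y$ path active given $\B{W}$ that avoids $Z$ (contradicting that all such paths go through $Z$), or an $X$-$Z$ path active given $\B{W}\cup U$ (contradicting the second premise).
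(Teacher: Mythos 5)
Your treatment of the first three rules is correct and is essentially the paper's own proof: rule \eref{eq:entner_rule} via the directed path that the conditioning set would have to block, and the two minimal-(in)dependence rules by expanding the definition and splicing a prefix of the witnessing path with a directed path out of the blocking noncollider (for the third rule) or out of the activating collider chosen closest to $X$ (for the second rule). For rule \eref{eq:lastrule} your skeleton also matches the paper's (concatenate the two $\B{W}$-active paths, cut at the first meeting node $U$, force $U$ to be a collider with no descendant in $\B{W}$), but you assert rather than prove the one step that actually carries the argument: conditioning on $Z$ opens the junction only if $U$ is an \emph{ancestor of} $Z$, and $U$ is in general not $Z$, so ``in the generic case the junction is $Z$ itself'' does not cover it. The paper closes this step explicitly: since $U$ is not an ancestor of $\B{W}$, it cannot be a collider on either original path, hence on each of them it has an outgoing edge pointing toward $Z$; following these edges one can never meet a collider (colliders on those paths are ancestors of $\B{W}$, and then so would $U$ be), so there is a directed path $U \to \dots \to Z$. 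Without that, the activation of the junction is unsupported.

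The genuine gap is the fourth rule, and you flag it yourself: the case where the hypothetical path $\pi$, active given $\B{W}\cup U$, contains a collider $C$ kept active only through a descendant path to $U$ is left as a plan, not a proof. The missing ingredient is pinning down the ancestral relations of $U$, not ``acyclicity'': from the first premise and the third rule you have $X \nCI Z \mid \B{W}$, so the second premise makes $X \CI Z \mid \B{W}\cup[U]$ a \emph{minimal} independence, and Lemma~\ref{lemm:Claassen} gives $U \causes \{X,Z\}\cup\B{W}$; since $C \causes U$ while $C \notcauses \B{W}\cup\{Z\}$, the options $U\causes Z$ and $U\causes\B{W}$ are excluded, forcing $U \causes X$. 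With that in hand the rerouting can be made to work: let $C_1$ be the collider on $\pi$ closest to $Y$ with $C_1 \notcauses \B{W}$ (it exists, and $C_1 \causes U \causes X$). If $C_1 \notcauses Z$, concatenating the reverse of a directed path $C_1 \to \dots \to X$ (which then avoids $\B{W}\cup\{Z\}$) with the segment of $\pi$ from $C_1$ to $Y$ gives a walk active given $\B{W}\cup Z$, contradicting $X \CI Y \mid \B{W}\cup Z$; if $C_1 \causes Z$, concatenating the segment of $\pi$ from $X$ to $C_1$ with a directed path $C_1 \to \dots \to Z$ (which avoids $\B{W}\cup\{U\}$) gives a walk active given $\B{W}\cup U$, contradicting the second premise. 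These are exactly the two contradictions your sketch aims for, but your sketch supplies neither the derivation of $U \causes X$ nor the selection of the right collider, so as submitted the fourth rule is unproven. For comparison, the paper proves this rule in the opposite direction --- every $X$--$Y$ path active given $\B{W}$ must carry $U$ as a noncollider on its $X\cdots Z$ prefix, hence is blocked given $\B{W}\cup U$ --- and is silent on precisely the collider-opening case you identified; so your approach, once completed as above, is not redundant with the paper's but rather fills in what its terse ``Therefore'' leaves implicit.
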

We prove the soundness of the rules in the Supplementary Material. We elaborate some conjectures about their completeness in the discussion after Theorem \ref{eq:soundness} in the next Section.

\paragraph{Optimization of loss function}
We formulate causal discovery as an optimization
problem where a loss function is optimized over possible causal structures. Intuitively, the loss function sums the weights of all the inputs that are violated in a candidate
causal structure. 

Given a list $I$ of weighted input statements $(i_j, w_j)$, where $i_j$ is the input statement and $w_j$ is the associated weight, we define the loss function as the sum of the weights of the input statements that are not satisfied in a given possible structure $W \in \mathcal{W}$, where $\mathcal{W}$ denotes the set of all possible causal structures.
Causal discovery is formulated as a discrete optimization problem:
\begin{equation} \label{minimization}
  W^* = \argmin_{W \in \mathcal{W}} \loss(W;I),
\end{equation}
\begin{equation}\label{loss}
  \loss(W;I) := \sum_{(i_j, w_j) \in I: \ W \cup \mathcal{R} \models \lnot i_j} w_j,
\end{equation}
where $W \cup \mathcal{R} \models \lnot i_j$ means that input $i_j$ is not satisfied in structure $W$ according to the rules $\mathcal{R}$.

This general formulation includes both HEJ and ACI, which differ in the types of possible structures $\mathcal{W}$ and the rules $\mathcal{R}$.
In HEJ $\mathcal{W}$ represents all possible causal graphs (specifically, acyclic directed mixed graphs, or ADMGs, in the acyclic case) and $\mathcal{R}$ are operations on d-connection graphs. In \name\ $\mathcal{W}$ represent ancestral structures (defined with the rules\eref{eq:reflexivity}-\eref{eq:acyclicity}) and the rules $\mathcal{R}$ are rules \eref{eq:causes_rule}--\eref{eq:lastrule}.

\paragraph{Constrained optimization in ASP}
The constrained optimization problem in \eref{minimization} can be implemented using a variety of methods. Given the complexity of the rules, a formulation in an expressive logical language that supports optimization, e.g., Answer Set Programming (ASP), is very convenient.
ASP is a widely used declarative programming language based on the stable model semantics \cite{DBLP:conf/aaai/Lifschitz08,DBLP:reference/fai/Gelfond08} that has successfully been applied to several NP-hard problems.
For ACI we use the state-of-the-art ASP solver \texttt{clingo 4} \cite{clingo}. We provide the encoding in the Supplementary Material. 

\paragraph{Weighting schemes}
\name\ supports two types of input statements: conditional independences and ancestral relations. These statements can each be assigned a weight that reflects their confidence. We propose two simple approaches with the desirable properties of making \name\ asymptotically consistent under mild assumptions (as described in the end of this Section), and assigning a much smaller weight to independences than to dependences (which agrees with the intuition that one is confident about a measured strong dependence, but not about independence vs.\ weak dependence). 
The approaches are:
\begin{compactitem}
\item a \emph{frequentist} approach, in which for any appropriate frequentist statistical test with independence as null hypothesis (resp.\ a non-ancestral relation), we define the weight: 
\begin{equation}\label{eq:weightFreq}
w = |\log p - \log \alpha|, \text{where} \ p = \text{$p$-value of the test}, \alpha = \text{significance level (e.g., 5\%)};
\end{equation}
\item a \emph{Bayesian} approach, in which the weight of each input statement $i$ using data set $\mathcal{D}$ is:
\begin{equation}\label{eq:weightBayes}
  w = \log \frac{p(i | \mathcal{D})}{p(\lnot i| \mathcal{D})}
  = \log \frac{p(\mathcal{D} | i)}{p(\mathcal{D} | \lnot i)} \frac{p(i)}{p(\lnot i)},
\end{equation}
where the prior probability $p(i)$ can be used as a tuning parameter.
\end{compactitem}
Given observational and interventional data, in which each intervention has a single known target (in particular, it is not a \emph{fat-hand} intervention \cite{EatonMurphy07}), a simple way to obtain a 
weighted ancestral statement $X \causes Y$ is with a two-sample test that tests whether
the distribution of $Y$ changes with respect to its observational distribution when intervening on $X$.
This approach conveniently applies to various types of interventions: perfect interventions \cite{Pearl2009}, soft interventions \cite{Markowetz++2005}, mechanism changes \cite{TianPearl2001}, and activity interventions \cite{MooijHeskes_UAI_13}.
The two-sample test can also be implemented as an independence test that tests for the independence of $Y$ and $I_X$, the indicator
variable that has value $0$ for observational samples and $1$ for samples from the interventional 
distribution in which $X$ has been intervened upon.

\section{Scoring causal predictions}
The constrained minimization in \eref{minimization} may produce several optimal solutions, because the underlying structure may not be identifiable from the inputs. To address this issue,
we propose to use the loss function \eref{loss} and score the confidence of a feature $f$ (e.g., an ancestral relation $X \causes Y$) as:
\begin{equation}\label{eq:confidence}
  C(f) = \min_{W \in \mathcal{W}} \loss(W;I \cup \{(\lnot f,\infty)\}) - \min_{W \in \mathcal{W}} \loss(W;I \cup \{(f,\infty)\}).
\end{equation}
Without going into details here, we note that the confidence \eref{eq:confidence} can be interpreted as a MAP approximation of the log-odds ratio of the probability that feature $f$ is true in a Markov Logic model:
$$\frac{\Prb(f \given I, \mathcal{R})}{\Prb(\lnot f \given I, \mathcal{R})} = \frac{\sum_{W \in \mathcal{W}} e^{-\loss(W;I)} 1_{W \cup \mathcal{R} \models f}}{\sum_{W \in \mathcal{W}} e^{-\loss(W;I)} 1_{W \cup \mathcal{R} \models \lnot f}} \approx \frac{\max_{W \in \mathcal{W}} e^{-\loss(W;I \cup \{(f,\infty)\})}}{\max_{W \in \mathcal{W}} e^{-\loss(W;I \cup \{(\lnot f,\infty)\})}} = e^{C(f)}.$$
In this paper, we usually consider the features $f$ to be ancestral relations, but the idea is more generally applicable. For example, combined with HEJ it can be used to score direct causal relations.

\paragraph{Soundness and completeness}
Our scoring method is sound for oracle inputs:
\begin{theorem}\label{eq:soundness}
Let $\mathcal{R}$ be sound (not necessarily complete) causal reasoning rules. For any feature $f$, the confidence score 
$C(f)$ of \eref{eq:confidence} is sound for oracle inputs with infinite weights.
\end{theorem}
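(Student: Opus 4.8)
The plan is to show that, with oracle inputs carrying infinite weight, the \emph{sign} of $C(f)$ is never wrong: $C(f)>0$ forces $f$ to hold in the true ancestral structure and $C(f)<0$ forces it to fail, while incompleteness of $\mathcal{R}$ may only produce the inconclusive value $C(f)=0$. Write $W_0$ for the unique true ancestral structure induced by the causal DAG, and let $I$ be the oracle inputs, each a true statement of infinite weight. Since every weight is infinite, for any statement $g$ the quantity $\min_{W \in \mathcal{W}} \loss\bigl(W; I \cup \{(g,\infty)\}\bigr)$ is either $0$ (when some $W$ is consistent with $I$ and with $g$) or $+\infty$ (when no such $W$ exists); hence $C(f) \in \{-\infty, 0, +\infty\}$, and it suffices to exclude the two wrong-sign cases.

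The central step is a consequence of soundness of $\mathcal{R}$: for every statement $s$ that is true of the data-generating DAG one has $W_0 \cup \mathcal{R} \not\models \lnot s$. Intuitively, soundness guarantees that anything derivable from the true structure together with true premises is again true, so the false statement $\lnot s$ can never be derived and $W_0$ therefore violates no true statement. Applying this to each oracle input gives $\loss(W_0; I) = 0$, and applying it to the feature gives $\loss\bigl(W_0; I \cup \{(f,\infty)\}\bigr)=0$ whenever $f$ is true, and $\loss\bigl(W_0; I \cup \{(\lnot f,\infty)\}\bigr)=0$ whenever $f$ is false. Turning this intuition into a rigorous argument is the one delicate point, because a candidate structure $W$ contributes only ancestral facts whereas the rules link ancestral facts and (in)dependences in both directions (including their contrapositives); the bookkeeping needed to verify that no true input or feature is forced false by $W_0$ is where I expect the real work to lie.

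The conclusion is then immediate from nonnegativity of the loss. Suppose $f$ is true. The second minimand in \eref{eq:confidence} is at most $\loss\bigl(W_0; I \cup \{(f,\infty)\}\bigr)=0$ and is nonnegative, hence equals $0$; the first minimand is a minimum of nonnegative terms and is therefore $\ge 0$. Thus $C(f) \ge 0$, and by the trichotomy above $C(f) \in \{0,+\infty\}$. Exchanging the roles of $f$ and $\lnot f$ gives $C(f) \in \{-\infty,0\}$ whenever $f$ is false. Taking contrapositives yields exactly the soundness statement — $C(f)>0 \Rightarrow f$ holds and $C(f)<0 \Rightarrow f$ fails — which completes the proof.
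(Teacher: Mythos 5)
There is a genuine gap, and it lies not in your intermediate steps but in what you end up proving. The paper defines soundness of $C(f)$ immediately after the theorem as a three-way characterization in terms of \emph{identifiability}: $C(f)=+\infty$ if $f$ is identifiable from the inputs, $C(f)=-\infty$ if $\lnot f$ is identifiable, and $C(f)=0$ when neither is. Your conclusion --- $C(f)>0 \Rightarrow f$ true and $C(f)<0 \Rightarrow f$ false --- is only the ``no wrong sign'' consequence of those clauses, not the clauses themselves. Concretely, you never show that the score \emph{detects} identifiable features (that $C(f)=+\infty$, rather than $0$, when $f$ is derivable from $I$ via $\mathcal{R}$), nor that it withholds infinite confidence from non-identifiable ones (that $C(f)=0$, rather than $\pm\infty$, when neither $f$ nor $\lnot f$ is derivable). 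A score that is identically zero satisfies your conclusion vacuously yet violates the theorem, which shows the statement you prove is strictly weaker. Note also that ``identifiable from the inputs'' must here be read as entailment relative to the (possibly incomplete) rules $\mathcal{R}$; under the stronger reading of identifiability from the underlying DAG, the theorem would in fact be false without completeness, so aiming the proof at truth in the true DAG rather than at derivability is aiming at the wrong target.

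The good news is that your key lemma is exactly the right ingredient, and the missing pieces are short. Soundness of $\mathcal{R}$ gives what you establish: the true configuration (the true ancestral structure $W_0$ together with the true (in)dependences) satisfies all rules and all oracle inputs, so $\min_{W\in\mathcal{W}}\loss(W;I)=0$ and the feasible set is nonempty. Now argue three cases. If $f$ is identifiable, then every structure consistent with all inputs entails $f$, so $\min_{W\in\mathcal{W}}\loss(W;I\cup\{(\lnot f,\infty)\})=+\infty$, while $\min_{W\in\mathcal{W}}\loss(W;I\cup\{(f,\infty)\})=0$ because $f$ holds in $W_0$ (it is entailed by true inputs under rules that $W_0$ satisfies); hence $C(f)=+\infty$. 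The case where $\lnot f$ is identifiable is symmetric. If neither is identifiable, then by definition of (non-)entailment there exist input-consistent structures with $f$ true and with $f$ false, so both minima are $0$ and $C(f)=0$. This case analysis also disposes of the $\infty-\infty$ ambiguity that your trichotomy claim glosses over. The paper's own proof is a one-liner (``follows directly from the soundness of the rules and the soundness of logical reasoning''), and it is precisely this case analysis that the one-liner compresses; your $W_0$ argument fleshes out the role played by soundness of the rules, but it must then be combined with the entailment reading of identifiability rather than with the truth of $f$.
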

Here, soundness means that $C(f)=\infty$ if $f$ is identifiable from the inputs, 
  $C(f)=-\infty$ if $\lnot f$ is identifiable from the inputs, and $C(f)=0$ otherwise (neither are identifiable).
As features, we can consider for example ancestral relations $f = X \causes Y$ for variables $X,Y$.
We conjecture that the rules \eref{eq:reflexivity}--\eref{eq:lastrule} are ``order-1-complete'', i.e., they allow one to deduce all (non)ancestral relations that are identifiable from oracle conditional independences of order $\leq 1$ in observational data. 
For higher-order inputs additional rules can be derived. However, our primary interest in this work is improving computation time and accuracy, and we are willing to sacrifice completeness. A more detailed study of the completeness properties is left as future work.

\paragraph{Asymptotic consistency}
Denote the number of samples by $N$.
For the frequentist weights in \eref{eq:weightFreq},
we assume that the statistical tests are consistent in the following sense:
\begin{equation}\label{eq:weightFreqConsistency}
\log p_N - \log \alpha_N \xto{P} \begin{cases}
  -\infty & H_1 \\
  +\infty & H_0,
\end{cases}
\end{equation}
as $N \to \infty$, where the null hypothesis $H_0$ is independence/nonancestral relation and the alternative hypothesis $H_1$ is dependence/ancestral relation. Note that we need 
to choose a sample-size dependent threshold $\alpha_N$ such that $\alpha_N \to 0$ at a suitable rate. Kalisch and B{\"u}hlmann \cite{KalischBuehlmann2007} show how this can be done for partial correlation tests under the assumption that the distribution is multivariate Gaussian.

For the Bayesian weighting scheme in \eref{eq:weightBayes}, we assume that for $N\to\infty$,
\begin{equation}\label{eq:weightBayesConsistency}
  w_N \xto{P} \begin{cases}
    -\infty &\text{ if $i$ is true} \\
    +\infty &\text{ if $i$ is false}.
  \end{cases}
\end{equation}
This will hold (as long as there is no model
misspecification) under mild technical conditions for finite-dimensional exponential family models. 
In both cases, the probability of a type I or type II error will converge to 0,
and in addition, the corresponding weight will converge to $\infty$.

\begin{theorem}
Let $\mathcal{R}$ be sound (not necessarily complete) causal reasoning rules. 
For any feature $f$, the confidence score $C(f)$ of \eref{eq:confidence} is asymptotically consistent
under assumption \eref{eq:weightFreqConsistency} or \eref{eq:weightBayesConsistency}.
\end{theorem}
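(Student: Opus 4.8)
The plan is to derive asymptotic consistency from the oracle soundness already established in Theorem~\ref{eq:soundness}, by showing that the finite-sample confidence score converges in probability to the \emph{oracle} confidence score, which by Theorem~\ref{eq:soundness} equals $+\infty$, $-\infty$, or $0$ according to whether $f$, $\lnot f$, or neither is identifiable from the oracle inputs. Convergence to $\pm\infty$ is understood in the usual sense that $\Prb(C(f) > M) \to 1$ (resp.\ $\Prb(C(f) < -M) \to 1$) for every $M > 0$.

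First I would reduce everything to a single high-probability event. The set of tested (in)dependence and ancestral relations is finite, say of size $m$, yielding a random weighted input list $I_N = \{(i_j, w_{j,N})\}_{j=1}^m$. For each tested relation, assumptions \eqref{eq:weightFreqConsistency} and \eqref{eq:weightBayesConsistency} guarantee, as summarised in the text preceding the theorem, that the probability of selecting the wrong statement tends to $0$ while the weight of the selected statement diverges to $+\infty$ in probability. Hence, for every threshold $M>0$, the event $E_N(M)$ on which all $m$ selected statements coincide with the oracle ground truth and simultaneously $\min_j w_{j,N} > M$ satisfies $\Prb(E_N(M)) \to 1$, by a union bound over the finitely many tests. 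Next I would analyse the loss landscape on $E_N(M)$. By soundness of $\mathcal{R}$, the true ancestral structure $W^*$ entails the negation of no oracle-correct statement, so $\min_W \loss(W;I_N) = 0$ on $E_N(M)$, attained at $W^*$. I then split into the three cases of Theorem~\ref{eq:soundness}. If $f$ is identifiable, every structure entailing $\lnot f$ must violate at least one oracle statement, so forcing $\lnot f$ gives $\min_W \loss(W;I_N \cup \{(\lnot f,\infty)\}) \ge \min_j w_{j,N} > M$, whereas $W^*$ is compatible with $f$ and yields $\min_W \loss(W;I_N \cup \{(f,\infty)\}) = 0$; thus $C(f) > M$ on $E_N(M)$. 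The case where $\lnot f$ is identifiable is symmetric and gives $C(f) < -M$. If neither is identifiable, there exist zero-loss structures witnessing each polarity, so both terms in \eqref{eq:confidence} vanish and $C(f) = 0$ exactly on $E_N(M)$. Combining the three cases with $\Prb(E_N(M)) \to 1$ gives convergence in probability to the oracle value.

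The main obstacle is the second step: faithfully translating the ``infinite-weight'' hard constraints appearing in \eqref{eq:confidence} into bounds on the finite-weight minimisation. Concretely, I must lower-bound $\min_W \loss(W;I_N \cup \{(\lnot f,\infty)\})$ by $\min_j w_{j,N}$ when $f$ is identifiable, which requires invoking the exact notion of identifiability used in Theorem~\ref{eq:soundness} --- that forcing the non-identifiable polarity is inconsistent with the oracle inputs under $\mathcal{R}$ --- and simultaneously checking, via soundness of $\mathcal{R}$, that a genuinely zero-loss minimiser ($W^*$) exists so that the subtracted term stays bounded. Once this correspondence between the finite-weight and oracle loss landscapes is in place, the probabilistic part is routine, as it rests only on the finitely-many-tests union bound.
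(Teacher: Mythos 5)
Your proposal is correct and follows essentially the same route as the paper's proof: exploit the finiteness of the set of tests to get a high-probability event on which all test results are correct and all weights exceed any threshold, observe that structures compatible with the true causal DAG then have vanishing loss while incompatible ones have diverging loss, and conclude via the identifiability case analysis of the soundness theorem. The paper states this argument far more tersely (leaving the case analysis and the union bound implicit), so your write-up is a more detailed rendering of the same idea rather than a different proof.
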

Here, ``asymptotically consistent'' means that the confidence score  
$C(f) \to \infty$ in probability if $f$ is identifiably true, $C(f) \to -\infty$ in probability 
if $f$ is identifiably false, and $C(f)\to 0$ in probability otherwise.

\section{Evaluation} \label{secEval}
In this section we report evaluations on synthetically generated data and an application on a real dataset. Crucially, in causal discovery precision is often more important than recall. In many real-world applications, discovering a few high-confidence causal relations is more useful than finding every possible causal relation, as reflected in recently proposed algorithms, e.g., \cite{ICP}. 

\begin{figure}
  \begin{minipage}[b]{0.6\textwidth}
    \centering%
    \begin{tabular}{ |l|l|l|l|l|l| }%
    \hline%
      \multicolumn{6}{|c|}{\textbf{Average execution time (s)}} \\%
      \hline \hline
    $n$ &$c$& \name\ & HEJ & BAFCI & BACFCI\\\hline
    6 & 1 & 0.21 &	12.09  & 8.39 & 12.51\\ 
    6 & 4 & 1.66 & 432.67 & 11.10 & 16.36\\ \hline
    7 & 1 & 1.03 &	715.74 & 9.37 & 15.12\\ \hline
    8 & 1 & 9.74 &  $\geq 2500 $ & 13.71 & 21.71\\ \hline
    9 & 1 & 146.66 &  $\gg 2500 $ & 18.28 & 28.51 \\\hline
    \end{tabular}
    \caption*{(a)\label{extime}}
    \par\vspace{0pt}
  \end{minipage}
  \centering
  \begin{minipage}[b]{0.35\textwidth}
    \centering     %
	\includegraphics[width=\textwidth]{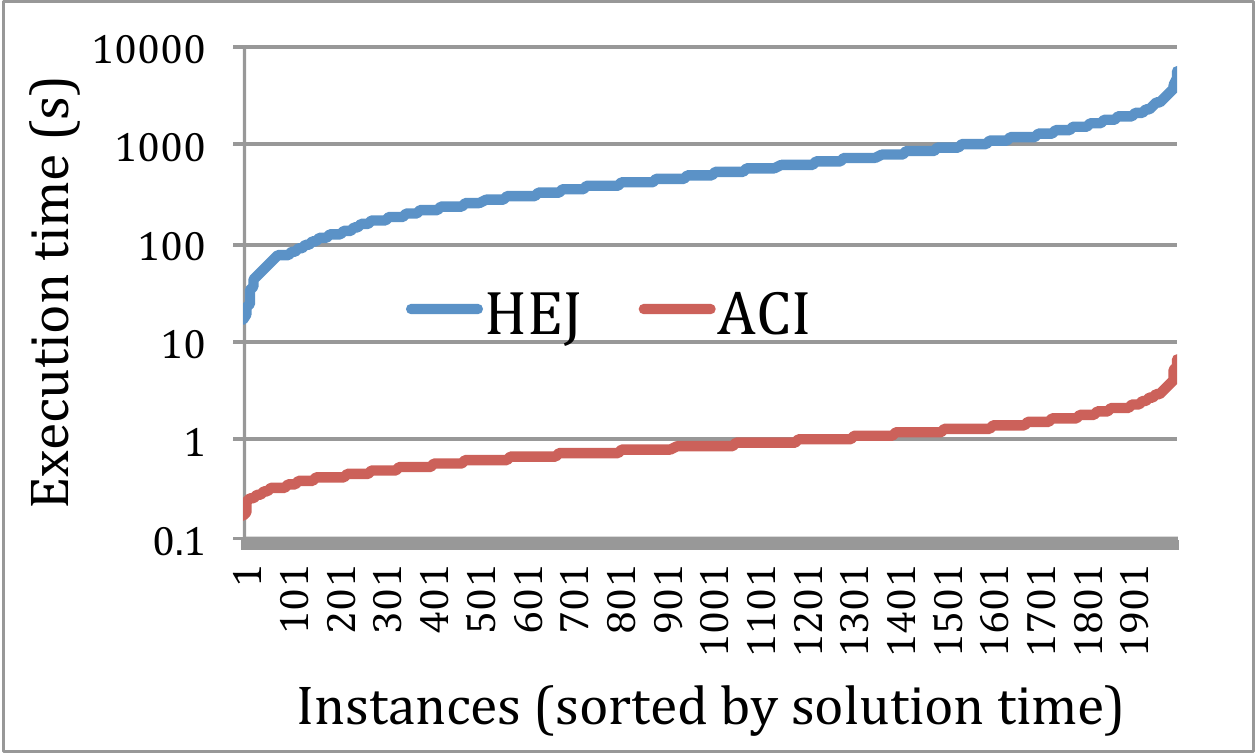}
	\caption*{(b)\label{fig:7_1_time}}
    \par\vspace{0pt}
  \end{minipage}%
  \caption{Execution time comparison on synthetic data for the frequentist test on 2000 synthetic models: (a) average execution time for different combinations of number of variables $n$ and max.\ order $c$; (b) detailed plot of execution times for $n=7, c=1$ (logarithmic scale). \label{fig:times}}
\end{figure}

\paragraph{Compared methods} 
We compare the predictions of \name\ and of the acyclic causally insufficient version of HEJ \cite{antti}, when used in combination with our scoring method \eref{eq:confidence}.
We also evaluate two standard methods: Anytime FCI \cite{Spirtes01ananytime,Zhang:2008:COR:1414091.1414237} and Anytime CFCI \cite{Colombo++2012}, as implemented in the \texttt{pcalg} R package \cite{pcalg}. We use the anytime versions of (C)FCI because they allow for independence test results up to a certain order.
We obtain the ancestral relations from the output PAG using Theorem 3.1 from \cite{ancestralFCI}.
(Anytime) FCI and CFCI do not rank their predictions, but only predict the type of relation: ancestral (which we convert to +1), non-ancestral (-1) and unknown (0).
To get a scoring of the predictions, we also compare with bootstrapped versions of Anytime FCI and Anytime CFCI. We perform the bootstrap by repeating the following procedure 100 times: sample randomly half of the data, perform the independence tests, run Anytime (C)FCI. From the 100 output PAGs we extract the ancestral predictions and average them.
We refer to these methods as BA(C)FCI.
For a fair comparison, we use the same independence tests and thresholds for all methods.

\paragraph{Synthetic data}
We simulate the data using the simulator from HEJ \cite{antti}:
for each experimental condition (e.g., a given number of variables $n$ and order $c$), we generate randomly $M$ linear acyclic models with latent variables and Gaussian noise and sample $N=500$ data points. We then perform independence tests up to order $c$ and weight the (in)dependence statements using the weighting schemes described in Section~\ref{aci}.
For the frequentist weights
we use tests based on partial correlations and Fisher's $z$-transform to obtain approximate $p$-values
(see, e.g., \cite{KalischBuehlmann2007}) with significance level $\alpha=0.05$.
For the Bayesian weights, we use the Bayesian test for conditional independence presented in \cite{DBLP:journals/ci/MargaritisB09} as implemented by HEJ with a prior probability of 0.1 for independence. 

\begin{figure*}[th!]
\centering     %
\subfigure[PR ancestral: n=6]{\label{fig:6_pos}
\includegraphics[width=0.32\textwidth]{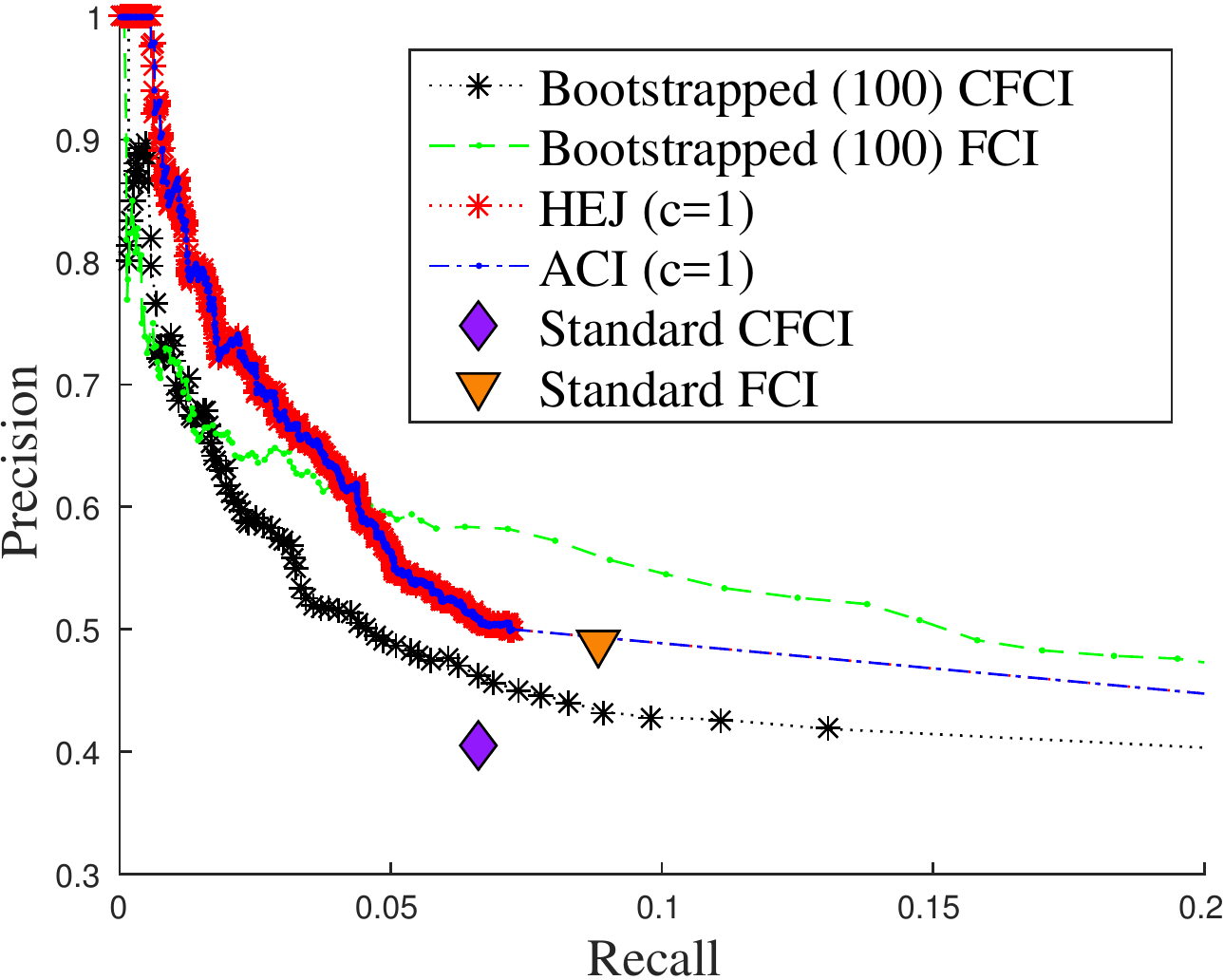}}
\subfigure[PR ancestral: n=6 (zoom)]{\label{fig:6_pos_zoom}\includegraphics[width=0.32\textwidth]{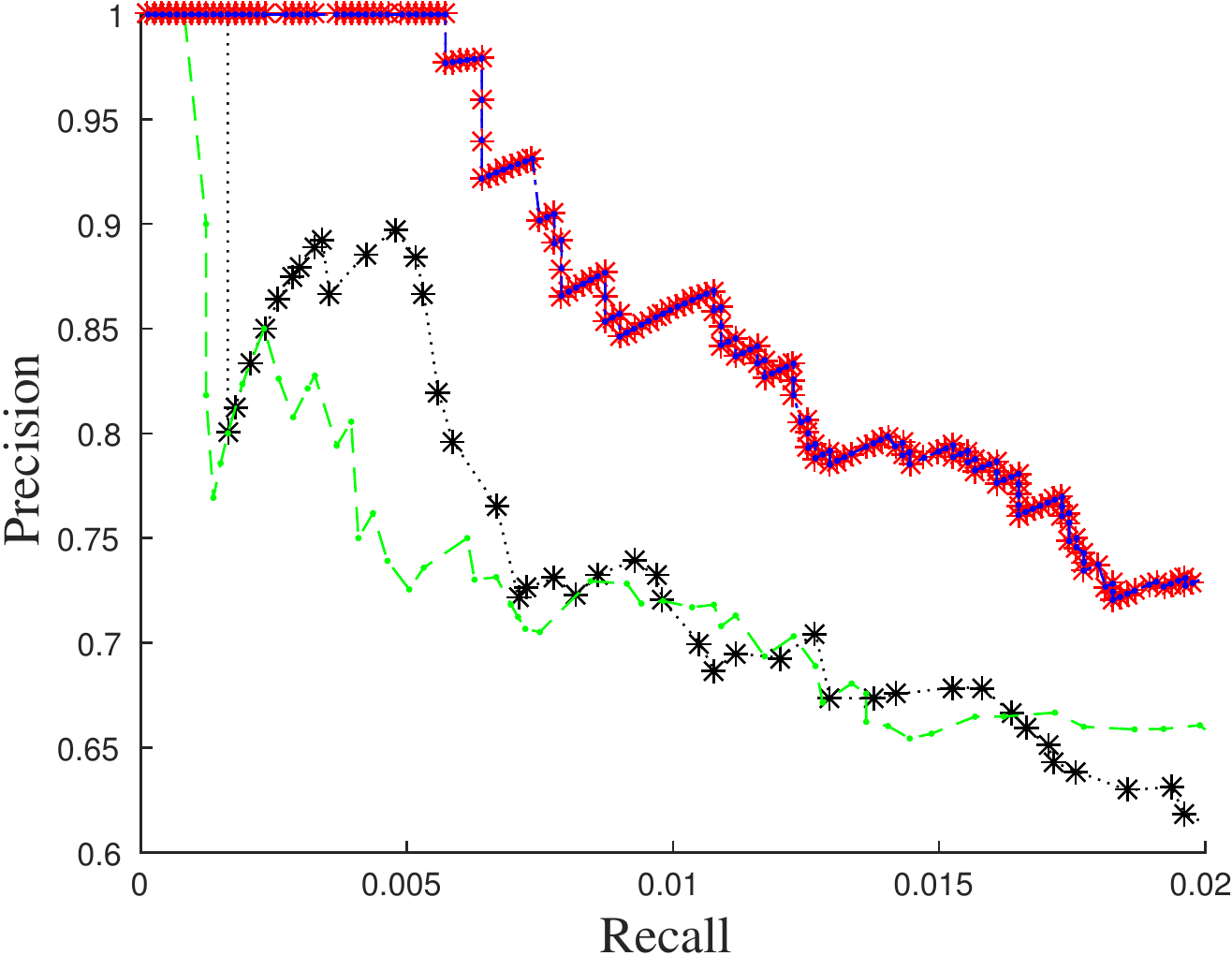}}
\subfigure[PR nonancestral: n=6]{\label{fig:6_neg}
\includegraphics[width=0.32\textwidth]{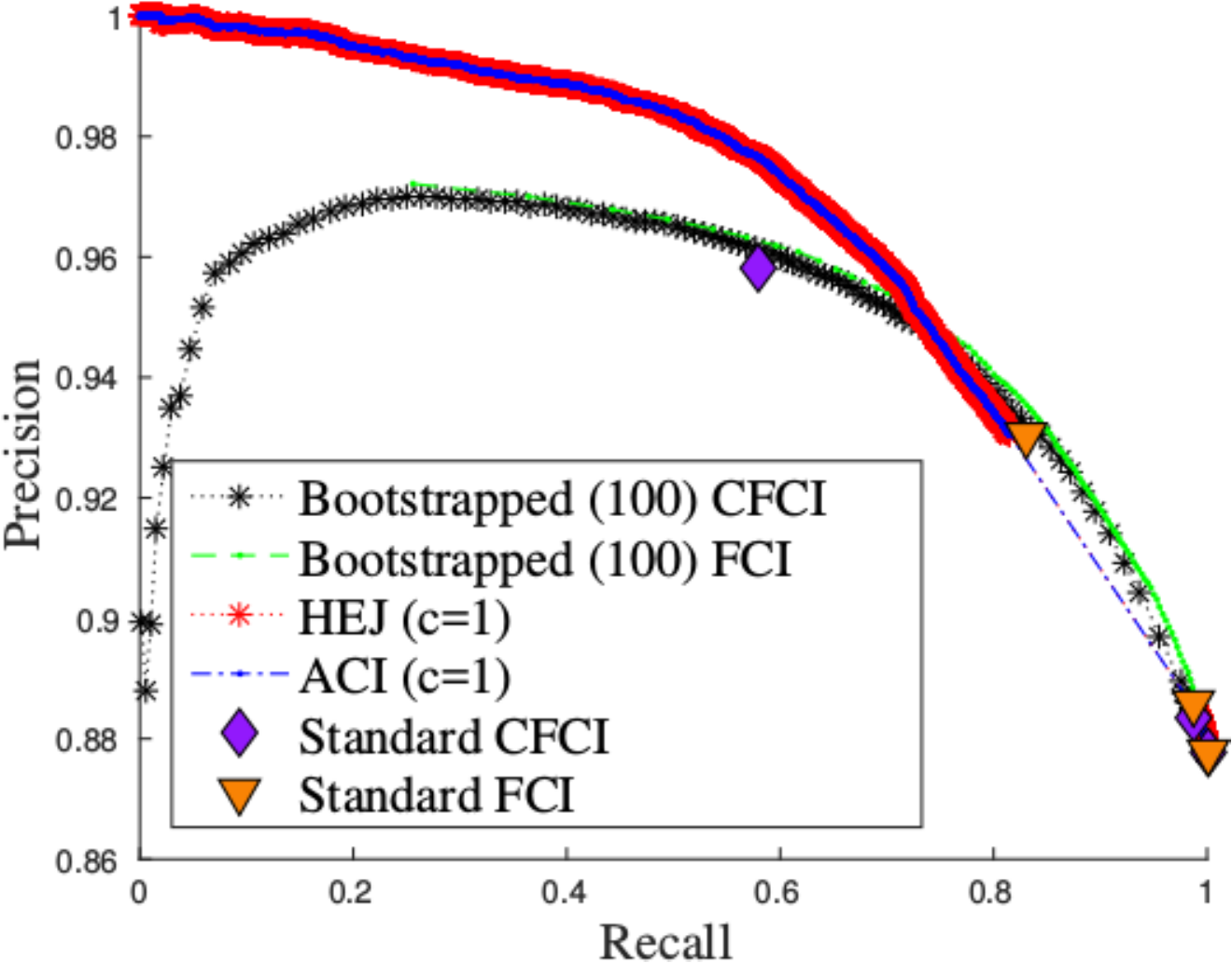}}
\\
\subfigure[PR ancestral: n=8]{\label{fig:8_pos}
\includegraphics[width=0.32\textwidth]{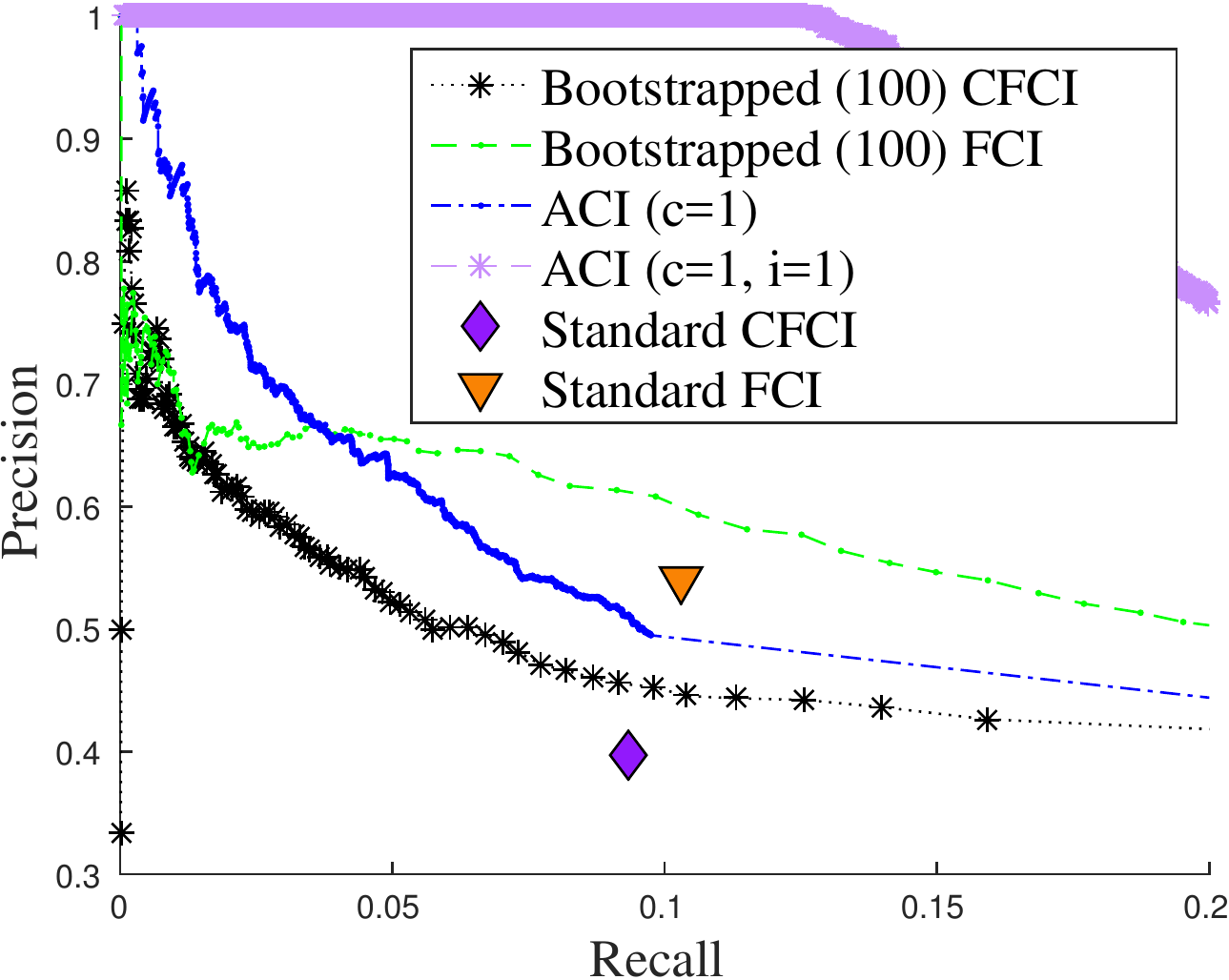}}
\subfigure[PR ancestral: n=8 (zoom)]{\label{fig:8_pos_zoom}\includegraphics[width=0.32\textwidth]{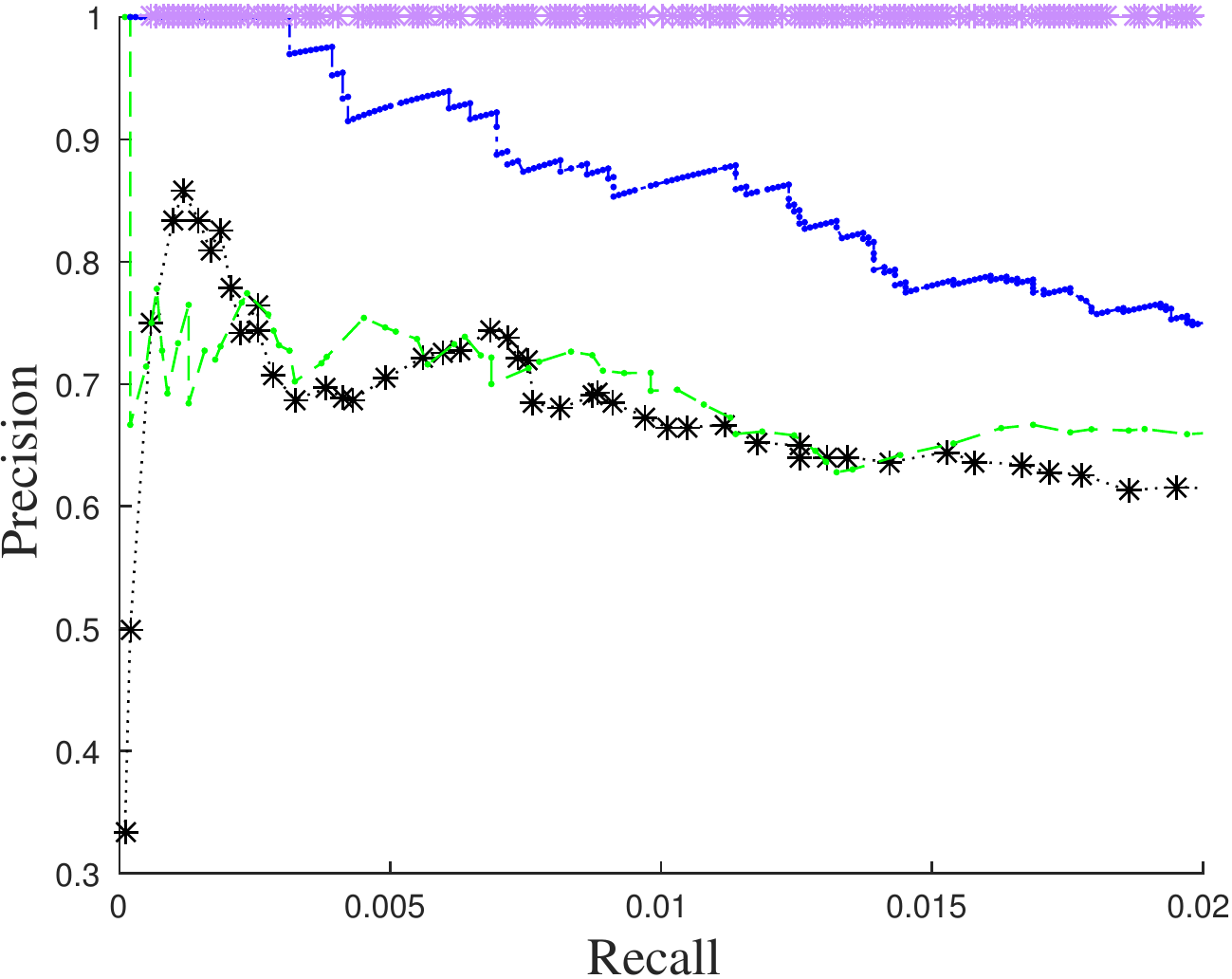}}
\subfigure[PR nonancestral: n=8]{\label{fig:8_neg}
\includegraphics[width=0.32\textwidth]{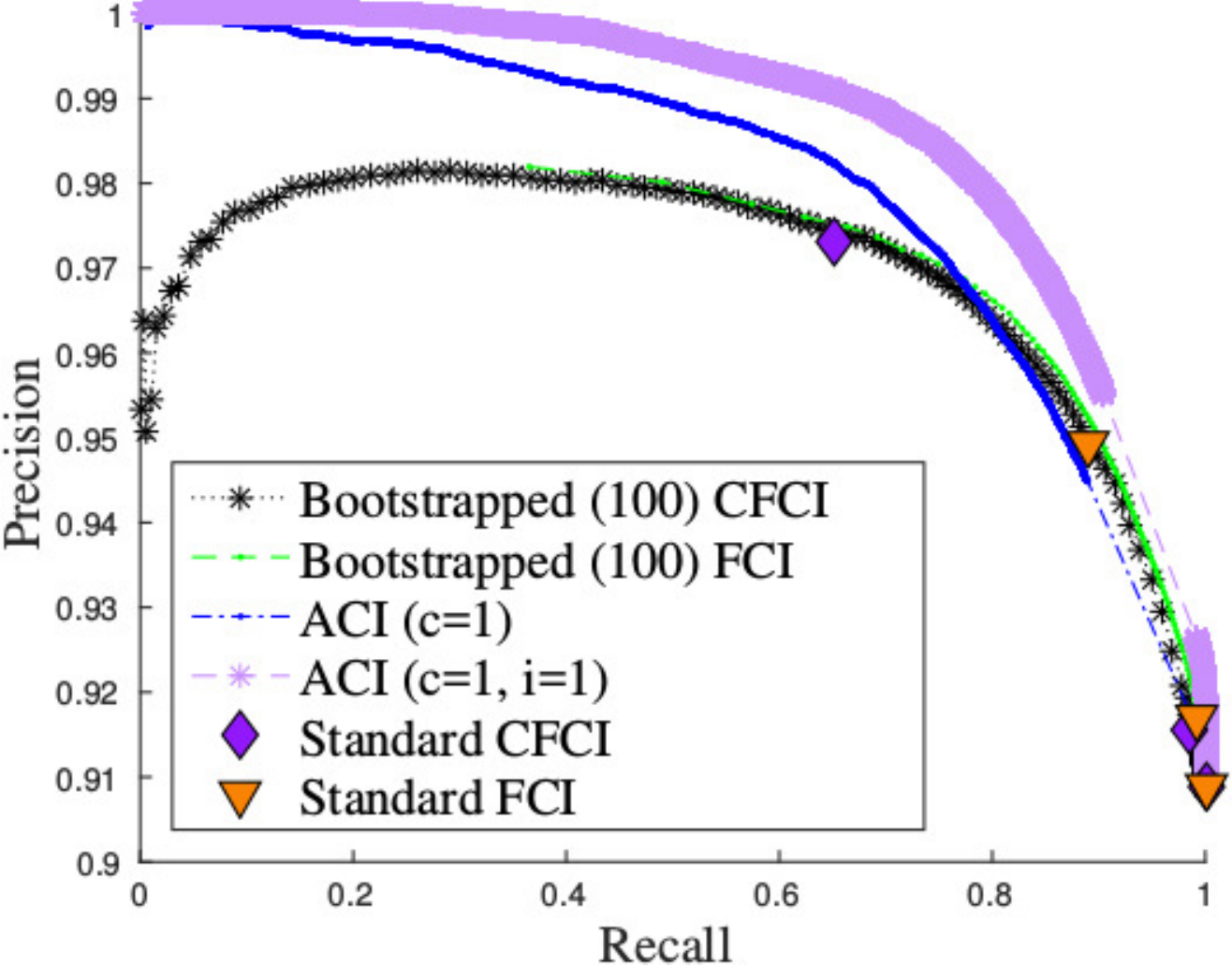}}
\caption{Accuracy on synthetic data for the two prediction tasks (ancestral and nonancestral relations) using the frequentist test with $\alpha=0.05$. The left column shows the precision-recall curve for ancestral predictions, the middle column shows a zoomed-in version in the interval (0,0.02), while the right column shows the nonancestral predictions.}
\label{fig:accuracy}
\end{figure*}

In Figure \ref{fig:times}(a) we show the average execution times on a single core of a 2.80GHz CPU for different combinations of $n$ and $c$, while in Figure \ref{fig:times}(b) we show the execution times for $n=7, c=1$, sorting the execution times in ascending order.
For 7 variables \name\ is almost 3 orders of magnitude faster than HEJ, and the difference grows exponentially as $n$ increases. For 8 variables HEJ can complete only four of the first 40 simulated models before the timeout of $2500$s. For reference we add the execution time for bootstrapped anytime FCI and CFCI.

In Figure \ref{fig:accuracy} we show the accuracy of the predictions with precision-recall (PR) curves for both ancestral ($X \causes Y$) and nonancestral ($X \notcauses Y$) relations, in different settings. In this Figure, for \name\ and HEJ all of the results are computed using frequentist weights and, as in all evaluations, our scoring method \eref{eq:confidence}. While for these two methods we use $c=1$, for (bootstrapped) (C)FCI we use all possible independence test results ($c=n-2$). In this case, the anytime versions of FCI and CFCI are equivalent to the standard versions of FCI and CFCI. Since the overall results are similar, we report the results with the Bayesian weights in the Supplementary Material.

In the first row of Figure \ref{fig:accuracy}, we show the setting with $n=6$ variables. The performances of HEJ and ACI coincide, performing significantly better for nonancestral predictions and the top ancestral predictions (see zoomed-in version in Figure \ref{fig:accuracy}(b)). This is remarkable, as HEJ and ACI use only independence test results up to order $c=1$, in contrast with (C)FCI which uses independence test results of all orders. Interestingly, the two discrete optimization algorithms do not seem to benefit much from higher order independence tests, thus we omit them from the plots (although we add the graphs in the Supplementary Material). Instead, bootstrapping traditional methods, oblivious to the (in)dependence weights, seems to produce surprisingly good results. Nevertheless, both ACI and HEJ outperform bootstrapped FCI and CFCI, suggesting these methods achieve nontrivial error-correction.

In the second row of Figure \ref{fig:accuracy}, we show the setting with 8 variables. In this setting HEJ is too slow. In addition to the previous plot, we plot the accuracy of \name\ when there is oracle background knowledge on the descendants of one variable ($i=1$). This setting simulates the effect of using interventional data, and we can see that the performance of  \name\ improves significantly, especially in the ancestral preditions. The performance of (bootstrapped) FCI and CFCI is limited by the fact that they cannot take advantage of this background knowledge, except with complicated postprocessing \cite{borboudakis2012}.

\paragraph{Application on real data}

We consider the challenging task of reconstructing
a signalling network from flow cytometry data \cite{SPP05} under different experimental conditions. 
Here we consider one experimental condition
as the observational setting and seven others as 
interventional settings. More details and more evaluations are reported in the Supplementary Material.
In contrast to likelihood-based approaches 
like \cite{SPP05,EatonMurphy07,MooijHeskes_UAI_13,Rothenhausler++2015}, in our
approach we do not need to model the interventions quantitatively. We only
need to know the intervention \emph{targets}, while the intervention \emph{types} do not matter.
Another advantage of our approach is that it takes into account possible latent variables.

\begin{figure}[t!]
\centering     %
\subfigure[Bootstrapped (100) anytime CFCI (input: independences of order $\leq 1$)]{
\includegraphics[width=0.27\textwidth]{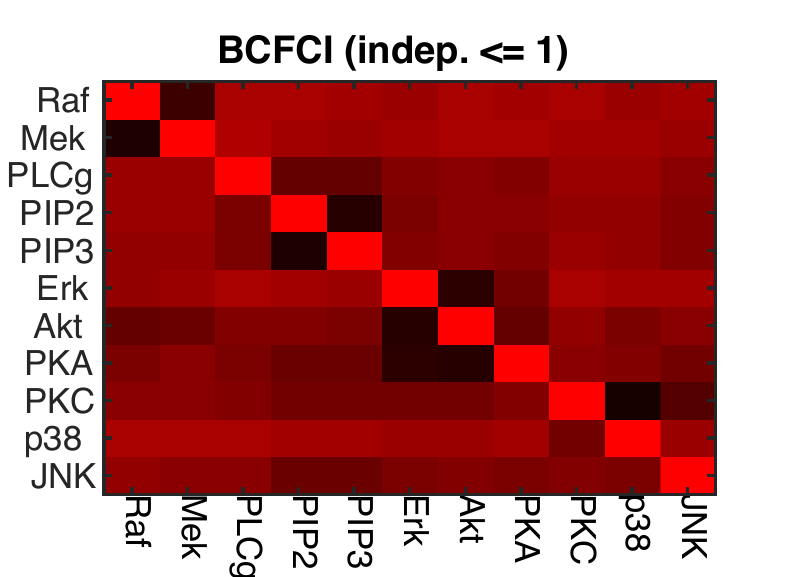}}\qquad
\subfigure[ACI (input: weighted ancestral relations)]{
\includegraphics[width=0.27\textwidth]{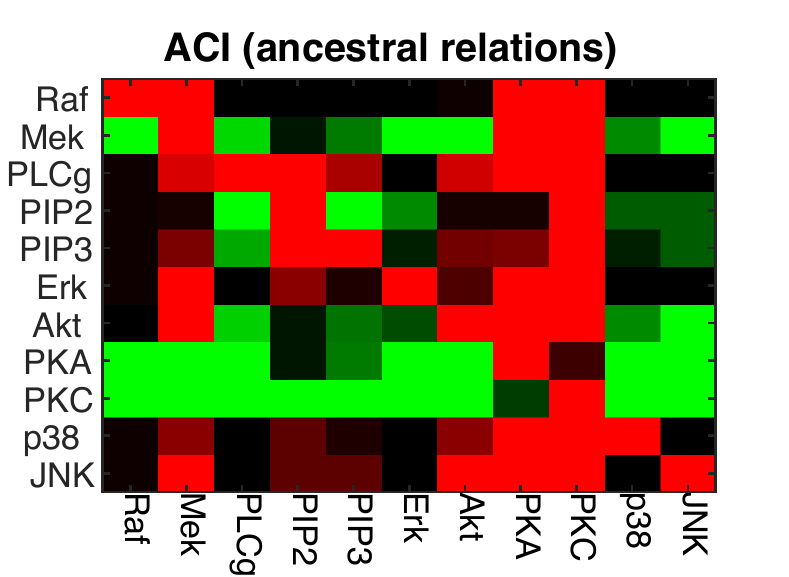}}\qquad
\subfigure[ACI (input: independences of order $\leq 1$, weighted ancestral relations)]{\includegraphics[width=0.27\textwidth]{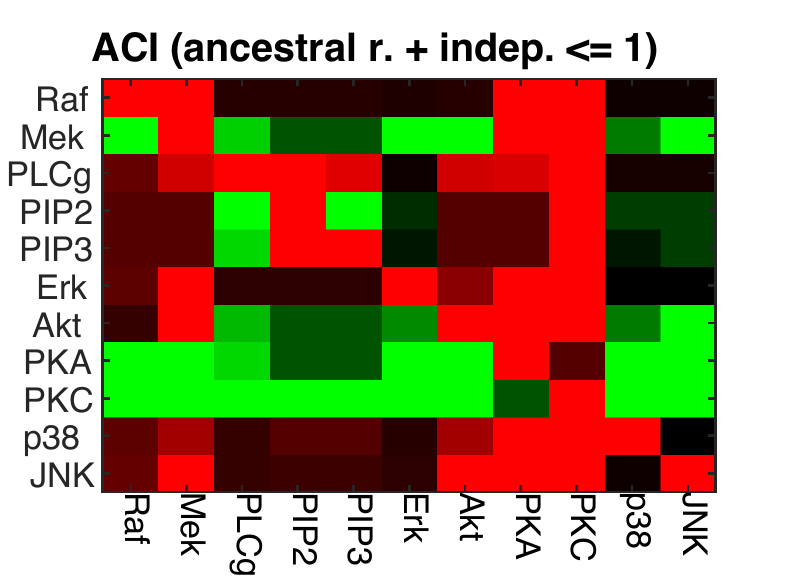}}
\subfigure{\includegraphics[width=0.07\textwidth]{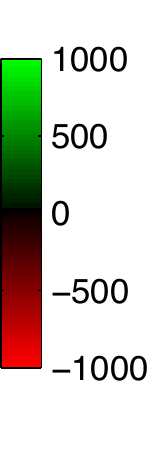}}
  \caption{\label{fig:Sachs}Results for flow cytometry dataset. Each matrix represents the ancestral relations, where each row represents a cause and each column an effect. The colors encode the confidence levels: green is positive, black is unknown,  while red is negative. The intensity of the color represents the degree of confidence. For example, ACI identifies MEK to be a cause of RAF with high confidence.}
\end{figure}

We use a $t$-test to test for each intervention and for each variable whether its
distribution changes with respect to the observational condition.
We use the $p$-values of these tests as in \eref{eq:weightFreq} in order to obtain 
weighted ancestral relations that are used as input (with threshold $\alpha = 0.05$). 
For example, if adding U0126 (a MEK inhibitor) changes the distribution of RAF 
significantly with respect to the observational baseline, we get a
weighted ancestral relation MEK$\causes$RAF. In addition, we use partial correlations
up to order 1 (tested in the observational data only) to obtain weighted independences used as input. 
We use \name\ with \eref{eq:confidence} to score the ancestral relations for each ordered pair of variables. 
The main results are illustrated in Figure~\ref{fig:Sachs}, where we compare ACI with bootstrapped anytime CFCI under different inputs. The output for boostrapped anytime FCI is similar, so we report it only in the Supplementary Material. 
Algorithms like (anytime) (C)FCI can only use the independences in the observational data as input and therefore miss the strongest signal, \emph{weighted ancestral relations}, which are obtained by comparing interventional with observational data.
In the Supplementary Material, we compare also with other methods (\cite{ICP}, \cite{MooijHeskes_UAI_13}). Interestingly, as we show there, our results are similar to the best acyclic model reconstructed by the score-based method from \cite{MooijHeskes_UAI_13}. As for other constraint-based methods, HEJ is computationally unfeasible in this setting, while COMBINE assumes perfect interventions (while this dataset contains mostly activity interventions). 

Notably, our algorithms can correctly recover from faithfulness violations (e.g., the independence between MEK and ERK), because they take into account the weight of the input statements (the weight of the independence is considerably smaller than that of the ancestral relation, which corresponds with a quite significant change in distribution). 
In contrast, methods that start by reconstructing the skeleton, like (anytime) (C)FCI, would decide that MEK and ERK are nonadjacent, and are unable to recover from that 
erroneous decision. This illustrates another advantage of our approach.

\section{Discussion and conclusions}
As we have shown, ancestral structures are very well-suited for causal discovery. They offer a natural way to incorporate background causal knowledge, e.g., from  experimental data, and allow a huge computational advantage over existing representations for error-correcting algorithms, such as \cite{antti}. When needed, ancestral structures can be mapped to a finer-grained representation with direct causal relations, as we sketch in the Supplementary Material.
Furthermore, confidence estimates on causal predictions are extremely helpful in practice, and can significantly boost the reliability of the output.
Although standard methods, like bootstrapping (C)FCI, already provide reasonable estimates, methods that take into account the confidence in the inputs, as the one presented here, can lead to further improvements of the reliability of causal relations inferred from data.

Strangely (or fortunately) enough, neither of the optimization methods seems to improve much with higher order independence test results. 
We conjecture that this may happen because our loss function essentially assumes that the test results are independent from another (which is not true). Finding a way to take this into account in the loss function may further improve the achievable accuracy, but such an extension may not be straightforward. 

\subsubsection*{Acknowledgments}
SM and JMM were supported by NWO, the Netherlands Organization for Scientific Research (VIDI grant 639.072.410). SM was also supported by the Dutch programme COMMIT/ under the Data2Semantics project.
TC was supported by NWO grant 612.001.202 (MoCoCaDi), and EU-FP7 grant agreement n.603016 (MATRICS).
We also thank Sofia Triantafillou for her feedback, especially for pointing out the correct way to read ancestral relations from a PAG.

\section{Proofs}

\subsection{ACI causal reasoning rules}

We give a combined proof of all the ACI reasoning rules. Note that the numbering of the rules here is different from the numbering used in the main paper.
\begin{lemma}
For $X$, $Y$, $Z$, $U$, $\B{W}$ disjoint (sets of) variables:
  \begin{enumerate}
    \item $(X \CI Y \mid \B{W}) \wedge (X \notcauses \B{W}) \implies X \notcauses Y$
    \item $X \nCI Y \mid \B{W} \cup [Z] \implies (X \nCI Z \mid \B{W}) \land (Z \notcauses \{X,Y\} \cup \B{W})$
    \item $X \CI Y \mid \B{W} \cup [Z] \implies (X \nCI Z \mid \B{W}) \land (Z \causes \{X,Y\} \cup \B{W})$
    \item $(X \CI Y \mid \B{W} \cup [Z]) \land (X \CI Z \mid \B{W} \cup U) \implies (X \CI Y \mid \B{W} \cup U)$
    \item $(Z \nCI X \mid \B{W}) \land (Z \nCI Y \mid \B{W}) \land (X \CI Y \mid \B{W}) \implies X \nCI Y \mid \B{W} \cup Z$
  \end{enumerate}
\end{lemma}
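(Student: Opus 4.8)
The plan is to reduce every statement to a claim about d-separation in the underlying causal DAG (including its latent variables), using the Causal Markov and Faithfulness assumptions: a conditional independence $\B{X} \CI \B{Y} \given \B{W}$ holds iff $\B{X}$ and $\B{Y}$ are d-separated given $\B{W}$, and an ancestral relation $X \causes Y$ holds iff there is a directed path from $X$ to $Y$ (possibly routed through latents). Each rule is then proved by exhibiting or destroying an \emph{active} path, i.e.\ a path on which every collider has a descendant in the conditioning set and every non-collider lies outside it. First I would observe that the non-ancestral and ancestral conclusions appearing in rules 2 and 3 ($Z \notcauses \{X,Y\} \cup \B{W}$ and $Z \causes \{X,Y\} \cup \B{W}$) are exactly the conclusions of Lemma~\ref{lemm:Claassen}, i.e.\ rules \eref{eq:notcauses_rule} and \eref{eq:causes_rule}, so I may invoke them; the genuinely new content is the dependence conjunct $X \nCI Z \given \B{W}$ in rules 2 and 3, together with rules 1, 4 and 5.

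The core of the argument is two complementary \emph{splicing} observations about what happens when a single node is added to the conditioning set. First, if a path $p$ is active given $\B{W}$ but blocked given $\B{W} \cup Z$, then $Z$ must occur on $p$ as a non-collider; the sub-path of $p$ from $X$ to $Z$ inherits activity given $\B{W}$, which yields $X \nCI Z \given \B{W}$ for rule 3, and following the out-edge of $Z$ along $p$ until an endpoint or an already-active collider yields $Z \causes \{X,Y\} \cup \B{W}$. Second, if a path $p$ is blocked given $\B{W}$ but active given $\B{W} \cup Z$, then $p$ carries a collider $C$ with $C \causes Z$ and $C \notcauses \B{W}$; concatenating the segment of $p$ from $X$ to $C$ with a directed path $C \to \cdots \to Z$ produces a path active given $\B{W}$ (the colliders between $X$ and $C$ keep a descendant in $\B{W}$, $C$ itself becomes a non-collider outside $\B{W}$, and the appended nodes are descendants of $C$ and hence outside $\B{W}$), giving $X \nCI Z \given \B{W}$ for rule 2. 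Rule 1 is then immediate by contraposition: if $X \causes Y$, a directed path $X \to \cdots \to Y$ has all internal nodes as descendants of $X$, none of which can lie in $\B{W}$ since $X \notcauses \B{W}$, so the path is active given $\B{W}$ and $X \nCI Y \given \B{W}$, contradicting the hypothesis.

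Rules 4 and 5 I would prove by contradiction using the same splicing idea. For rule 5, the hypotheses give an active (given $\B{W}$) path $a$ from $X$ to $Z$ and an active path $b$ from $Z$ to $Y$; the hypothesis $X \CI Y \given \B{W}$ forbids their concatenation from being active given $\B{W}$, and since $Z \notin \B{W}$ the only possible obstruction is that the junction forms a collider at $Z$ (or at a shared node that is an ancestor of $Z$), so conditioning additionally on $Z$ activates it and yields $X \nCI Y \given \B{W} \cup Z$. For rule 4, I would assume $X \nCI Y \given \B{W} \cup U$ and take a witnessing active path $p$; the first antecedent forces $Z$ onto every $\B{W}$-active path as a non-collider and supplies $X \nCI Z \given \B{W}$ and $Z \causes \{X,Y\} \cup \B{W}$, and I would use these to locate the interaction of $Z$ with $p$ and splice an $X$--$Z$ segment into a path from $X$ to $Z$ that is active given $\B{W} \cup U$, contradicting the second antecedent $X \CI Z \given \B{W} \cup U$.

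The main obstacle throughout is the bookkeeping of collider versus non-collider status at the splice points: after concatenating a sub-path with a directed path one must recheck that every collider of the new path retains a descendant in the (possibly enlarged) conditioning set and that no non-collider falls inside it, and one must reduce the resulting walk to a genuine simple path when the pieces overlap. Rule 4 is the hardest, because two distinct extra variables ($Z$ and $U$) act on the conditioning set at once, so the construction needs a case analysis according to whether $U$ activates a collider that $Z$ would otherwise leave blocked; rule 5 is delicate for the analogous reason of identifying the correct meeting node of the two active paths. Finally, since the graph contains latent variables, I would keep all statements about directed paths and ``descendants in $\B{W}$'' at the level of the full DAG, so that the ancestral conclusions refer to the projection onto the observed variables as in the definition of the ancestral structure.
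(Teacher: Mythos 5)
Your overall strategy---reducing every claim to d-separation and then exhibiting or splicing active paths---is exactly the approach of the paper's proof, and your treatments of rules 1, 2, 3 and 5 match it in substance. Citing Lemma~\ref{lemm:Claassen} for the ancestral conclusions of rules 2 and 3 is a legitimate shortcut (the paper simply reproves them inline). Two of your parenthetical claims, however, are only true with the device the paper makes explicit: in rule 2, ``the colliders between $X$ and $C$ keep a descendant in $\B{W}$'' fails for an arbitrary collider $C$ with $C \causes Z$, $C \notcauses \B{W}$; one must take $C$ to be the such collider \emph{closest to $X$}, and similarly in rule 5 the meeting node of the two paths must be the one closest to $X$ so that the concatenation is a simple path. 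You flagged this bookkeeping and it is routine to repair, so I do not count it as a gap; likewise, in rule 5 you assert rather than prove that the junction collider is an ancestor of $Z$, which the paper establishes by following the junction's out-edges toward $Z$ and noting that meeting a collider would make the junction an ancestor of $\B{W}$.

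The genuine gap is in rule 4, precisely in the case you name but do not carry out. Suppose, as in your contradiction argument, $p$ is active given $\B{W} \cup U$. If $Z$ is a noncollider on $p$, the subpath $X \cdots Z$ of $p$ is itself active given $\B{W} \cup U$ and contradicts the second antecedent---fine. But $p$ may instead be blocked given $\B{W} \cup Z$ because some collider $D$ on $p$ is an ancestor of $U$ but not of $\B{W} \cup \{Z\}$; then $Z$ need not lie on $p$ at all, and your plan to ``locate the interaction of $Z$ with $p$'' has nothing to work with. Closing this case requires an extra idea, not just bookkeeping: since rule 3 gives $X \nCI Z \given \B{W}$, the second antecedent makes $X \CI Z \given \B{W} \cup [U]$ a \emph{minimal} independence, so Lemma~\ref{lemm:Claassen} yields $U \causes \{X,Z\} \cup \B{W}$; because $D \causes U$ and $D \notcauses \B{W} \cup \{Z\}$, this forces $U \causes X$, which pins down the orientation at $U$ of any $\B{W}$-active $X$--$Z$ path, and then splicing the truncated $p$ (from $X$ through the closest such collider $D$, then down a directed path into $U$) with the $U$--$Z$ piece of that path creates a collider at $U$ that conditioning on $U$ opens, contradicting $X \CI Z \given \B{W} \cup U$. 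To be fair, the paper's own proof of rule 4 is silent on exactly this case as well (it only shows that $\B{W}$-active paths become blocked once $U$ is added), so your sketch is no less complete than the paper's; but judged as a standalone proof, this step is missing, and the argument as you stated it would fail there.
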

\begin{proof}
  We assume a causal DAG with possible latent variables, the causal Markov assumption, and the causal faithfulness assumption.
\begin{enumerate}
  \item This is a strengthened version of rule $\mathcal{R}2$(i) in \cite{conf/aistats/EntnerHS13}: note that the additional assumptions made there ($Y \notcauses \B{W}$, $Y \notcauses X$) are redundant and not actually used in their proof. For completeness, we give the proof here. If $X \causes Y$, then there is a directed path from $X$ to $Y$. As all paths between $X$ and $Y$ are blocked by $\mathbf{W}$, the directed path from $X$ to $Y$ must contain a node $W \in \B{W}$. Hence $X \causes W$, a contradiction with $X \notcauses \mathbf{W}$.
  \item If $X \nCI Y \mid \B{W} \cup [Z]$ then there exists a path $\pi$ between $X$ and $Y$ such that each
    noncollider on $\pi$ is not in $\B{W} \cup \{Z\}$, every collider on $\pi$ is ancestor of $\B{W} \cup \{Z\}$,
    and there exists a collider on $\pi$ that is ancestor of $Z$ but not of $\B{W}$. Let $C$ be the collider on $\pi$
    closest to $X$ that is ancestor of $Z$ but not of $\B{W}$. Note that
    \begin{enumerate}
      \item The path $X \cdots C \to \cdots \to Z$ is d-connected given $\B{W}$.
      \item $Z \notcauses \B{W}$ (because otherwise $C \causes Z \causes \B{W}$, a contradiction).
      \item $Z \notcauses Y$ (because otherwise the path $X \cdots C \to \cdots \to Z \to \cdots \to Y$ would be d-connected given $\B{W}$, a contradiction).
    \end{enumerate}
    Hence we conclude that $X \nCI Z \mid \B{W}$, $Z \notcauses \B{W}$, $Z \notcauses Y$, and by symmetry also $Z \notcauses X$.
  \item Suppose $X \CI Y \mid \B{W} \cup [Z]$. Then there exists a path $\pi$ between $X$ and $Y$, such that each noncollider on $\pi$ is not in $\B{W}$, each collider on $\pi$ is an ancestor of $\B{W}$, and $Z$ is a noncollider on $\pi$. Note that
    \begin{enumerate}
      \item The subpath $X \cdots Z$ must be d-connected given $\B{W}$. 
      \item $Z$ has at least one outgoing edge on $\pi$. Follow this edge further along $\pi$ until reaching either $X$, $Y$, or the first collider. When a collider is reached, follow the directed path to $\B{W}$. Hence there is a directed path from $Z$ to $X$ or $Y$ or to $\B{W}$, i.e., $Z \causes \{X, Y\} \cup \B{W}$.
    \end{enumerate}
  \item If in addition, $X \CI Z \mid \B{W} \cup U$, then $U$ must be a noncollider on the subpath $X \cdots Z$. Therefore, $X \CI Y \mid \B{W} \cup U$.
  \item Assume that $Z \nCI X \mid \B{W}$ and $Z \nCI Y \mid \B{W}$. Then there must be paths $\pi$ between $Z$ and $X$ and $\rho$ between $Z$ and $Y$ such that each noncollider is not in $\B{W}$ and each collider is ancestor of $\B{W}$.  Let $U$ be the node on $\pi$ closest to $X$ that is also on $\rho$ (this could be $Z$). Then we have a path $X \cdots U \cdots Y$ such that each collider (except $U$) is ancestor of $\B{W}$ and each noncollider (except $U$) is not in $\B{W}$. This path must be blocked given $\B{W}$ as $X \CI Y \mid \B{W}$. If $U$ would be a noncollider on this path, it would need to be in $\B{W}$ in order to block it; however, it must then also be a noncollider on $\pi$ or $\rho$ and hence cannot be in $\B{W}$. Therefore, $U$ must be a collider on this path and cannot be ancestor of $\B{W}$. We have to show that $U$ is ancestor of $Z$. If $U$ were a collider on $\pi$ or $\rho$, it would be ancestor of $\B{W}$, a contradiction. Hence $U$ must have an outgoing arrow pointing towards $Z$ on $\pi$ and $\rho$. If we encounter a collider following the directed edges, we get a contradiction, as that collider, and hence $U$, would be ancestor of $\B{W}$. Hence $U$ is ancestor of $Z$, and therefore, $X \nCI Y \mid \B{W} \cup Z$.
\end{enumerate}
\end{proof}

\subsection{Soundness}
\begin{theorem}
Let $\mathcal{R}$ be sound (not necessarily complete) causal reasoning rules.
For any feature $f$, the confidence score 
$C(f)$ of (16) is sound for oracle inputs with infinite weights, i.e.,
$C(f)=\infty$ if $f$ is identifiable from the inputs, $C(f)=-\infty$ if $\lnot f$ is identifiable from the inputs, and $C(f)=0$ otherwise (neither are identifiable).
\end{theorem}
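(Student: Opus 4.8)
The plan is to exploit the fact that, with infinite weights on the oracle inputs, the loss $\loss(W;I)$ of \eref{loss} takes only the values $0$ or $\infty$, so that the score \eref{eq:confidence} collapses to a comparison of two satisfiability questions. First I would rewrite each of the two minima in \eref{eq:confidence}. Adding the constraint $(\lnot f,\infty)$ penalises by $\infty$ exactly those $W$ with $W\cup\mathcal{R}\models f$, so $\min_{W}\loss(W;I\cup\{(\lnot f,\infty)\}) = \min_{W\,:\,W\cup\mathcal{R}\not\models f}\loss(W;I)$, and symmetrically for the second term with $f$ and $\lnot f$ interchanged. Calling a structure \emph{consistent} when $\loss(W;I)=0$ (it contradicts none of the oracle inputs) and writing $\mathcal{W}_0$ for the set of consistent structures, each minimum equals $0$ if some consistent structure is compatible with the added literal and equals $\infty$ otherwise.

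Second, I would establish that $\mathcal{W}_0\neq\emptyset$. Since the inputs are oracle inputs, every $i_j\in I$ holds in the true ancestral structure $W^\star$; soundness of $\mathcal{R}$ then guarantees that $W^\star\cup\mathcal{R}$ never entails the negation of a true statement, so $W^\star$ contradicts no input and $\loss(W^\star;I)=0$, i.e.\ $W^\star\in\mathcal{W}_0$. This is precisely where the hypothesis that $\mathcal{R}$ is sound is used; completeness of $\mathcal{R}$ is never needed.

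Third, I would phrase identifiability in these terms: $f$ is \emph{identifiable from the inputs} exactly when every consistent structure satisfies $f$, i.e.\ when no $W\in\mathcal{W}_0$ satisfies $\lnot f$, and dually for $\lnot f$. Because each $W\in\mathcal{W}_0$ determines $f$ (an ancestral relation is a direct property of the ancestral structure) and hence satisfies exactly one of $f,\lnot f$, and because $\mathcal{W}_0$ is nonempty, at most one of the two ``no consistent witness'' situations can occur, which rules out the ill-defined case $\infty-\infty$. The statement then follows by a three-way split: if $f$ is identifiable the first minimum is $\infty$ and the second is $0$ (witnessed by any element of $\mathcal{W}_0$), giving $C(f)=\infty$; if $\lnot f$ is identifiable the roles swap and $C(f)=-\infty$; and if neither is identifiable there exist consistent structures satisfying $f$ and $\lnot f$ respectively, so both minima vanish and $C(f)=0$.

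The main obstacle I anticipate is definitional rather than computational: making precise the reading of ``identifiable from the inputs'' so that it coincides with ``holds in every minimal-loss (consistent) structure,'' and verifying that under this semantic reading the mere soundness of $\mathcal{R}$ suffices. Once identifiability is read off the set $\mathcal{W}_0$ of zero-loss structures, the only remaining care is bookkeeping with the infinite weights --- in particular confirming $\mathcal{W}_0\neq\emptyset$ and that $f$ and $\lnot f$ cannot both be unsatisfiable together with $I$ --- after which the three claimed values are immediate.
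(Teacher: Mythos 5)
Your proof is correct and takes essentially the same approach as the paper: the paper's own proof is a one-line appeal to the soundness of the rules and of logical reasoning (under the causal DAG and faithfulness assumptions), and your argument --- the true ancestral structure has zero loss by soundness of the rules, hence the three-way case analysis on the two constrained minima --- is exactly the detailed unpacking of that appeal. You in fact supply more detail than the paper does, including the reading of ``identifiable from the inputs'' via zero-loss structures, which is the reading needed for the statement to hold when the rules are sound but incomplete.
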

\begin{proof}
We assume that the data generating process is described by a causal DAG which may
contain additional latent variables, and that the distributions are faithful to the DAG.
The theorem then follows directly from the soundness of the rules and the soundness of logical reasoning.
\end{proof}

\subsection{Asymptotic consistency of scoring method}
\begin{theorem}
Let $\mathcal{R}$ be sound (not necessarily complete) causal reasoning rules.
For any feature $f$, the confidence score $C(f)$ of (16) is asymptotically consistent
under assumption (14) or (15) in the main paper, i.e.,
  \begin{itemize}
    \item $C(f) \to \infty$ in probability if $f$ is identifiably true,
    \item $C(f) \to -\infty$ in probability if $f$ is identifiably false,
    \item $C(f)\to 0$ in probability otherwise (neither are identifiable).
  \end{itemize}
\end{theorem}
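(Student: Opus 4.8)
The plan is to reduce this stochastic statement to the deterministic, oracle-level statement already established in Theorem~\ref{eq:soundness}, by showing that with probability tending to $1$ the finite-sample weighted problem behaves exactly like the oracle problem run with arbitrarily large but finite weights. Throughout I would use that both $\mathcal{W}$ (the ancestral structures on a fixed number of variables) and the input list $I$ are \emph{finite}, so all minima in \eref{eq:confidence} are attained and union bounds over the statements are harmless.

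First I would isolate a ``good event'' $E_{N,M}$, parametrized by a threshold $M>0$. Under assumption \eref{eq:weightFreqConsistency} (resp.\ \eref{eq:weightBayesConsistency}), for each individual input two things happen with probability $\to 1$ as $N\to\infty$: (i) the \emph{conclusion} carried by its weight (the statement that is effectively asserted, i.e.\ the one whose violation is penalized) agrees with the ground truth fixed by the true causal DAG, and (ii) the magnitude of that weight exceeds $M$. Let $E_{N,M}$ be the intersection, over the finitely many statements, of these events; by a union bound $\Prb(E_{N,M})\to 1$ for every fixed $M$. On $E_{N,M}$ the collection of asserted statements coincides with the oracle input list $I_\infty$ (all true (in)dependence and ancestral statements), each carried with a finite weight $\ge M$.

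Next I would run the soundness argument of Theorem~\ref{eq:soundness} ``at level $M$'' on the event $E_{N,M}$. The true ancestral structure $W^\ast$ satisfies every statement in $I_\infty$, so by soundness $\loss(W^\ast;I_\infty)=0$. If $f$ is identifiably true, then $W^\ast$ also satisfies $f$, so the second minimum in \eref{eq:confidence} equals $0$; meanwhile any $W\models\lnot f$ must violate at least one statement of $I_\infty$ (otherwise $f$ would not be entailed), so its loss is $\ge \min_j w_j \ge M$, whence the first minimum is $\ge M$ and $C(f)\ge M$. The case ``$f$ identifiably false'' is symmetric and gives $C(f)\le -M$. If neither is identifiable, non-identifiability supplies structures $W_1\models f$ and $W_0\models\lnot f$ that are both consistent with $I_\infty$ (loss $0$), so both minima equal $0$ and $C(f)=0$ exactly. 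Combining with $\Prb(E_{N,M})\to1$ yields $\Prb(C(f)>M)\to1$ in the first case, $\Prb(C(f)<-M)\to1$ in the second, and $\Prb(C(f)=0)\to1$ in the third; since $M$ is arbitrary, these are precisely the three claimed modes of convergence in probability.

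The main obstacle, and the only place where the two weighting schemes need separate care, is establishing property (i) of the good event: that the sign (conclusion) attached to each weight is eventually correct. For the frequentist weights this follows because \eref{eq:weightFreqConsistency} forces the sign of $\log p_N-\log\alpha_N$ to match the true hypothesis, while $|\log p_N-\log\alpha_N|\to\infty$ supplies property (ii); for the Bayesian weights the analogous statement follows directly from \eref{eq:weightBayesConsistency}. Once the inputs are correct and uniformly large, the reduction to Theorem~\ref{eq:soundness} is purely combinatorial and needs no further probabilistic input: the finiteness of $\mathcal{W}$ and of $I$ guarantees a single threshold $M$ beyond which the finite-weight minimizers match the oracle ones up to the additive gap $M$.
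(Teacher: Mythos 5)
Your proof is correct and takes essentially the same route as the paper's: a union bound over the finitely many inputs shows that, with probability tending to one, every asserted statement is correct and every weight exceeds any fixed $M$, after which structures compatible with the true causal DAG have loss $0$ while any structure forced to contradict an identifiable feature incurs loss at least $M$, giving the claimed trichotomy for $C(f)$. The paper's own proof is just a terser statement of the same argument (asserting the loss convergence directly and leaving the final step from losses to $C(f)$ implicit), whereas you make the good event $E_{N,M}$ and the reduction to the oracle soundness argument explicit.
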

\begin{proof}
As the number of statistical tests is fixed (or at least
bounded from above), the probability of \emph{any} error in the test results
converges to 0 asymptotically. The loss function of all structures that 
do not correspond with the properties of the true causal DAG
converges to $+\infty$ in probability, whereas the loss function of all
structures that are compatible with properties of the true causal DAG
converges to 0 in probability. 
\end{proof}

\begin{figure*}[h!]
\centering     %
\subfigure[PR ancestral]{\label{fig:higher_order_pos}
\includegraphics[width=0.65\textwidth]{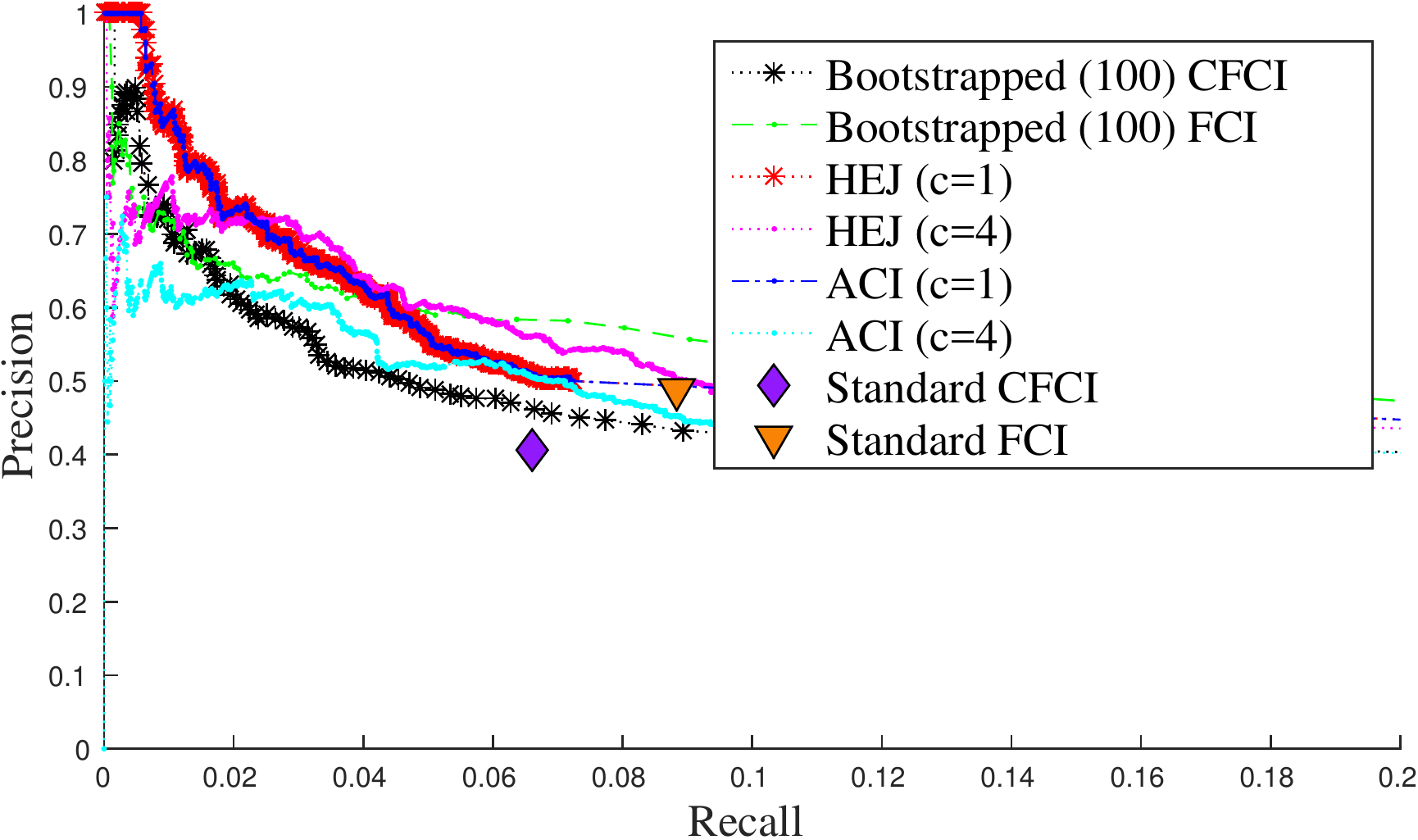}}\qquad
\\
\subfigure[PR ancestral (zoom)]{\label{fig:higher_order_pos_zoom}\includegraphics[width=0.65\textwidth]{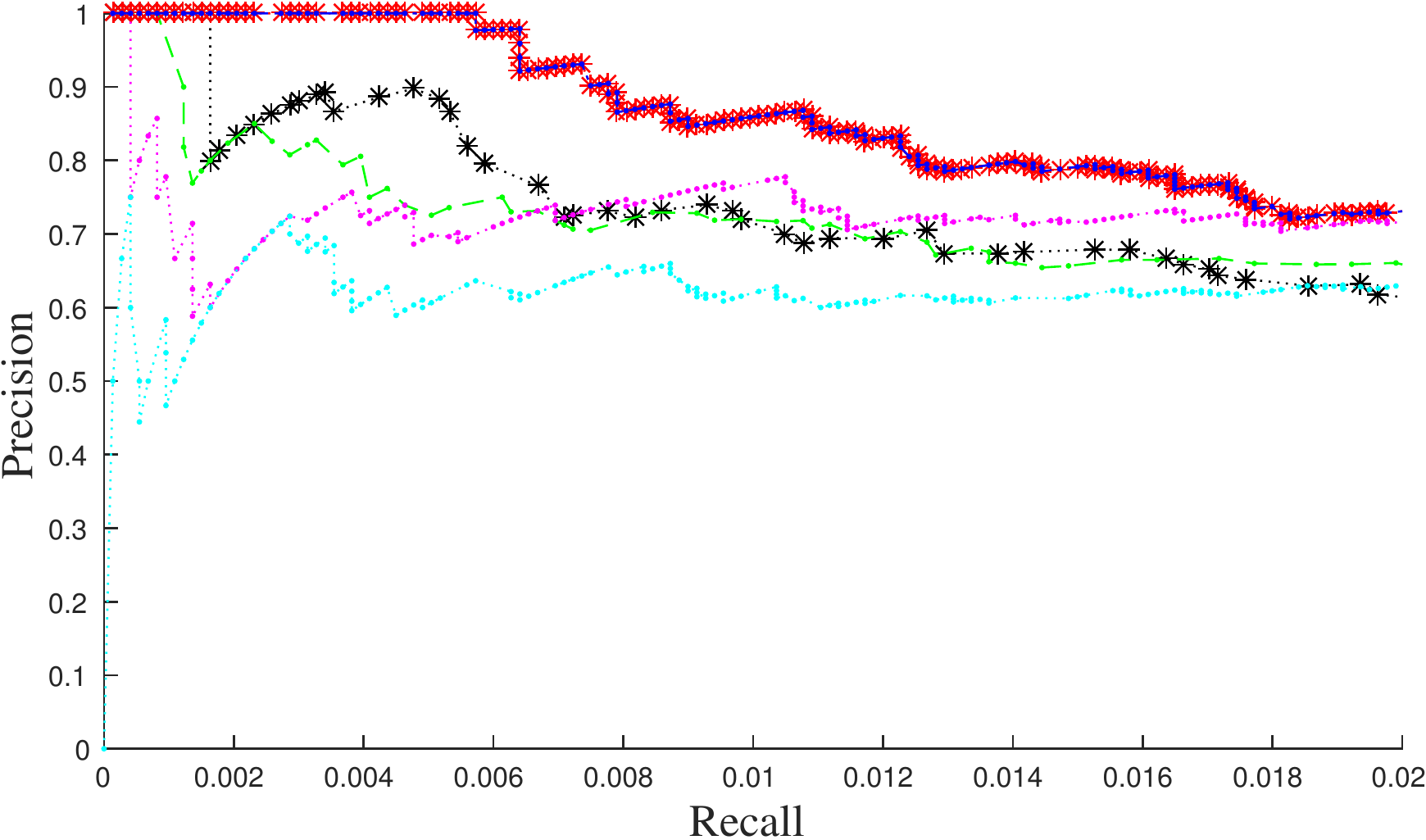}}\qquad
\\
\subfigure[PR nonancestral]{\label{fig:higher_order_neg}
\includegraphics[width=0.65\textwidth]{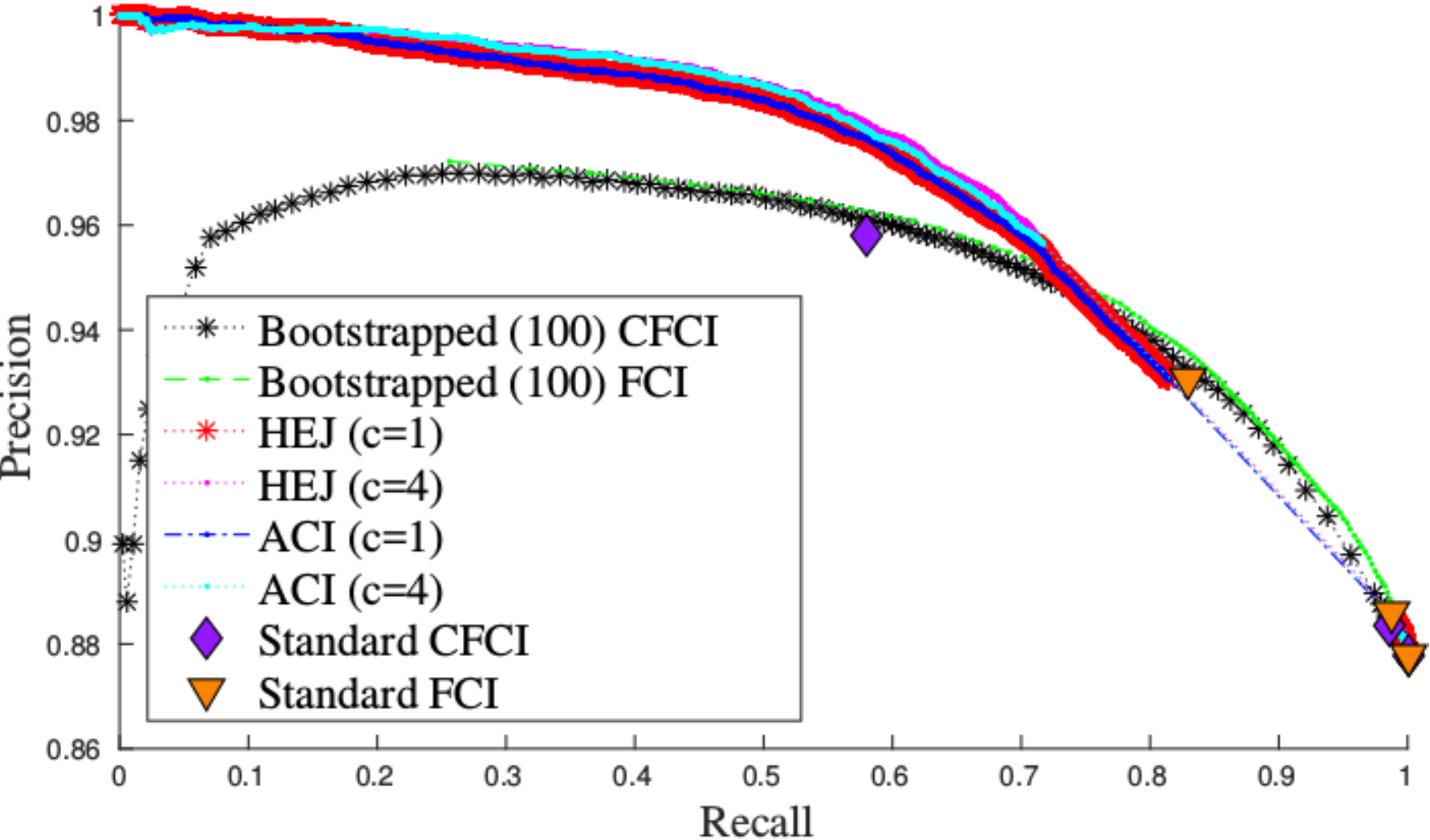}}
\caption{Synthetic data: accuracy for the two prediction tasks (ancestral and nonancestral relations) for $n=6$ variables using the frequentist test with $\alpha=0.05$, also for higher order $c$.
\label{fig:higher_order}}
\end{figure*}

\begin{figure*}[h!]
\centering     %
\subfigure[PR ancestral]{\label{fig:6_bayes_pos}
\includegraphics[width=0.5\textwidth]{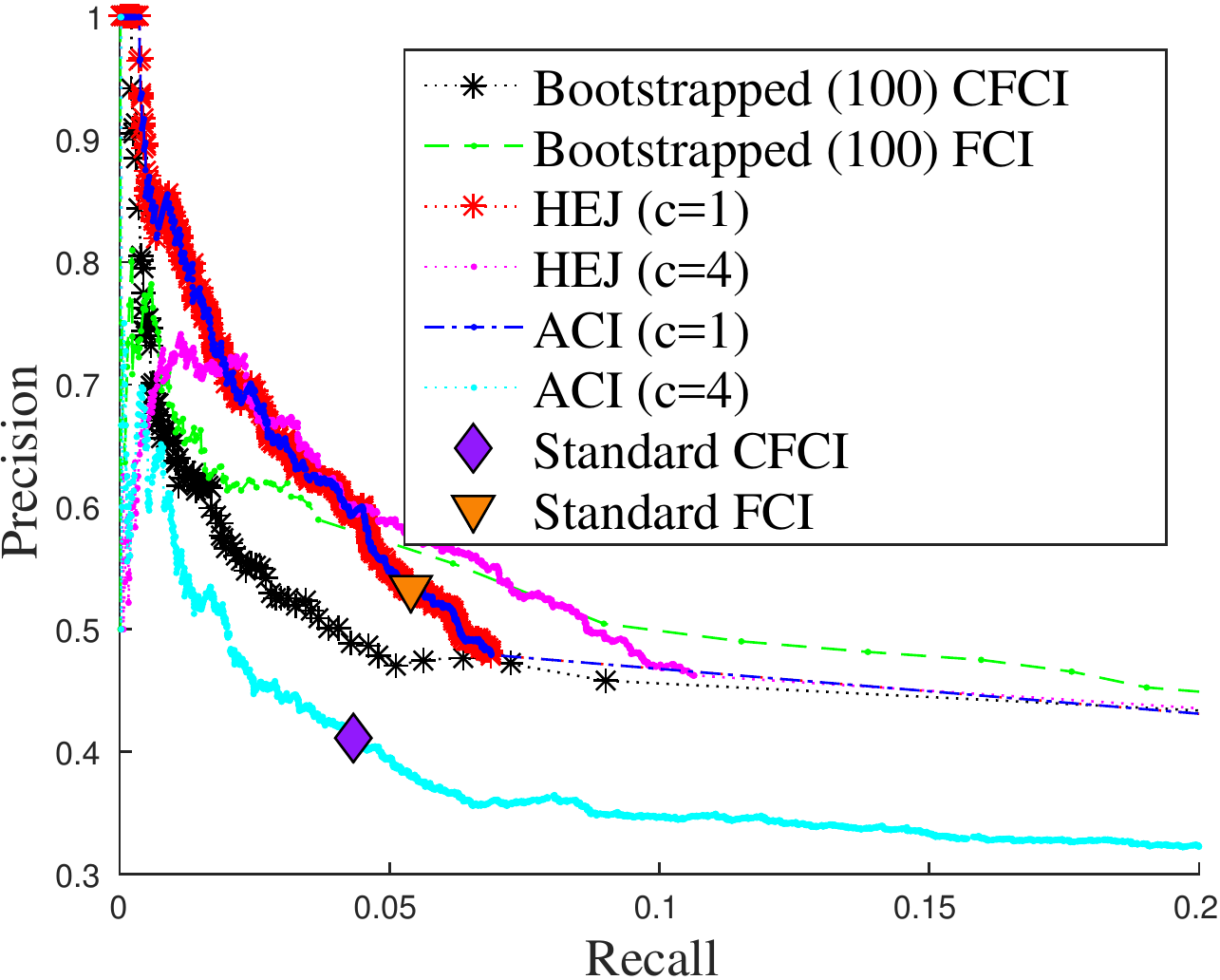}}
\\
\subfigure[PR ancestral (zoom)]{\label{fig:6_bayes_pos_zoom}\includegraphics[width=0.5\textwidth]{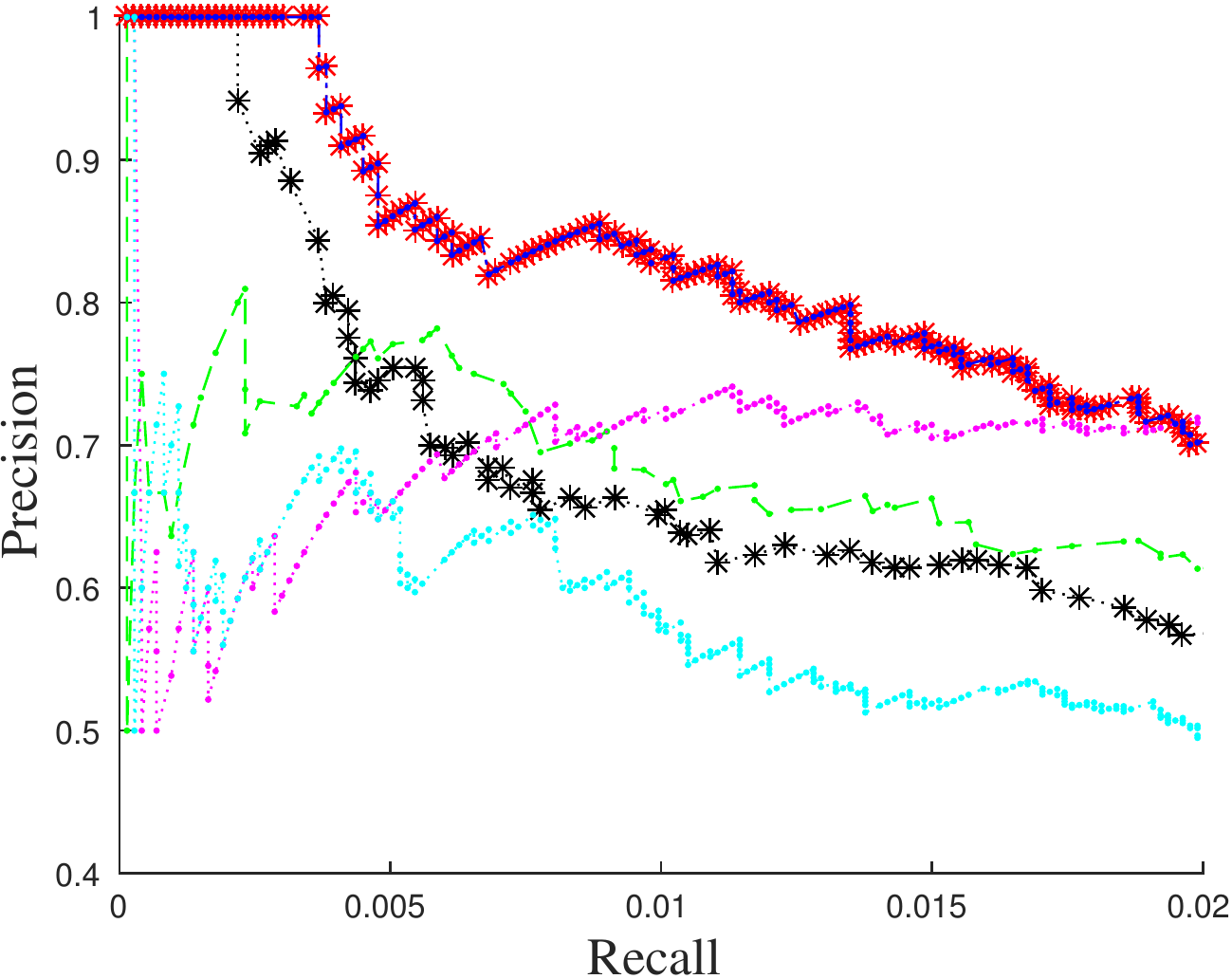}}
\\
\subfigure[PR nonancestral]{\label{fig:6_bayes_neg}
\includegraphics[width=0.51\textwidth]{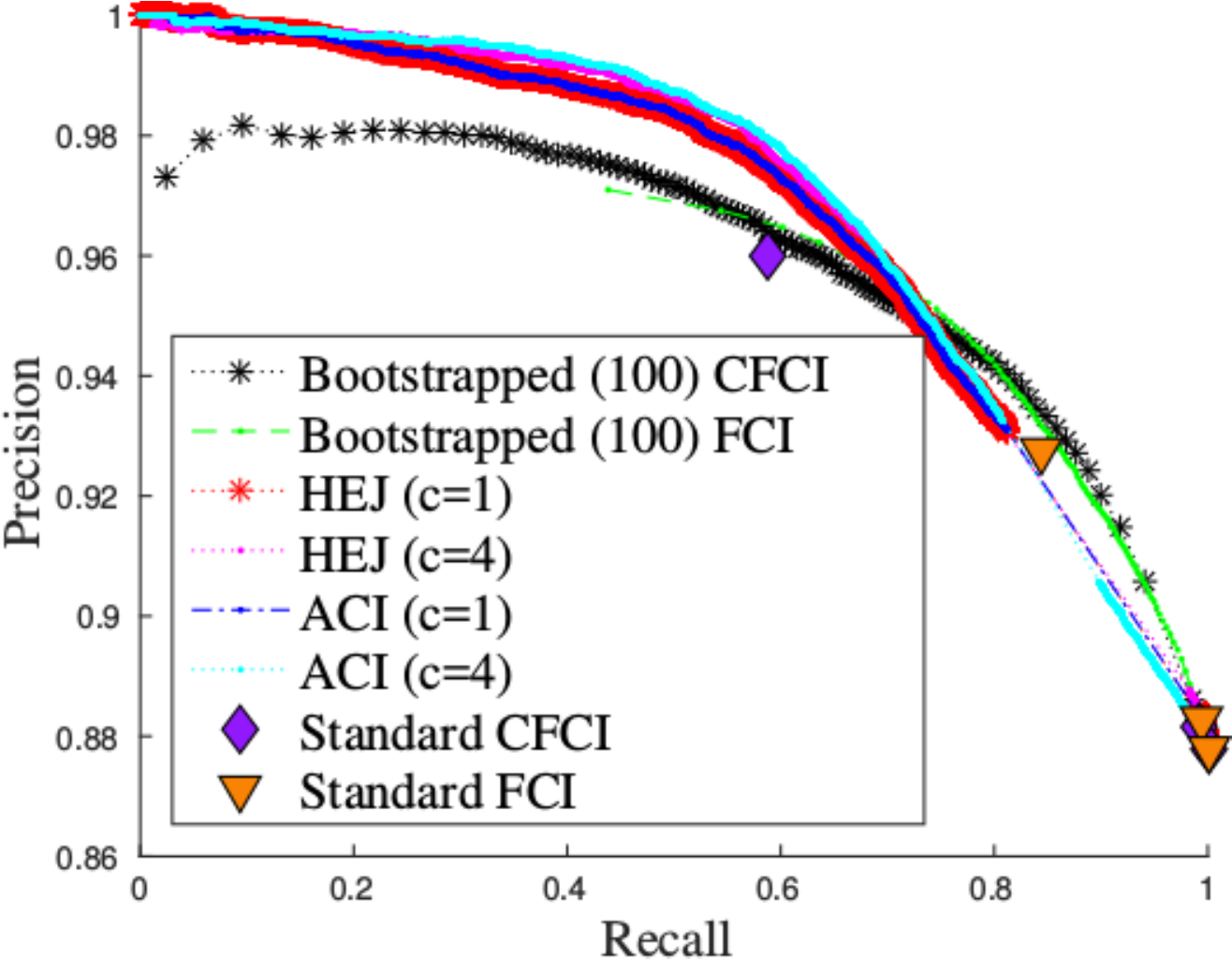}}
  \caption{Synthetic data: accuracy for the two prediction tasks (ancestral and nonancestral relations) for $n=6$ variables using the Bayesian test with prior probability of independence $p=0.1$. \label{fig:bayes}}
\end{figure*}

\section{Additional results on synthetic data}
In Figures \ref{fig:higher_order} and \ref{fig:bayes} we show the performance of ACI and HEJ \cite{antti} for higher order independence test results ($c=4$). As in the main paper, for (bootstrapped) FCI and CFCI we use $c=4$, because it gives the best predictions for these methods. In Figure \ref{fig:higher_order} we report more accuracy results on the frequentist test with $\alpha=0.05$, the same setting as Figure 2 (a-c) in the main paper. As we see, the performances of ACI and HEJ do not really improve with higher order but actually seem to deteriorate. 

In Figure \ref{fig:bayes} we report accuracy results on synthetic data also for the Bayesian test described in the main 
paper, with prior probability of independence $p=0.1$.
Using the Bayesian test does not change the overall conclusions:  ACI and HEJ overlap for order $c=1$ and they perform better than bootstrapped (C)FCI.

\newpage
\section{Application on real data}
We provide more details and more results on the real-world dataset that was briefly described in the main paper, the flow cytometry data \cite{SPP05}. 
The data consists of simultaneous measurements of expression levels of 11 biochemical agents in individual cells of the human immune system under 14 different experimental conditions.

\subsection{Experimental conditions}
The experimental conditions can be grouped into two batches of 8 conditions each that have very similar interventions: 
\begin{itemize}
\item ``no-ICAM'', used in the main paper and commonly used in the literature;
\item ``ICAM'', where Intercellular Adhesion Protein-2 (ICAM-2) was added (except when PMA or $\beta$2CAMP was added).
\end{itemize}
For each batch of 8 conditions, 
the experimenters added $\alpha$-CD3 and $\alpha$-CD28 to activate the signaling network in 6 out of 8 conditions. For the remaining two conditions (PMA and $\beta$2CAMP), $\alpha$-CD3 and $\alpha$-CD28 were not added (and neither was ICAM-2). We can consider the \emph{absence} of these stimuli as a global intervention relative to the observational baseline (where $\alpha$-CD3 and $\alpha$-CD28 are present, and in addition ICAM-2 is present in the ICAM batch).
For each batch (ICAM and no-ICAM), we can consider an observational dataset and 7 interventional datasets with different activators and inhibitors added to the cells, as described in Table \ref{sachsexperiments}. Note that the datasets from the last two conditions are the same in both settings. For more information about intervention types, see \cite{MooijHeskes_UAI_13}.

\begin{table}[t!]
\caption{Reagents used in the various experimental conditions in \cite{SPP05} and corresponding intervention types and targets. The intervention types and targets are based on (our interpretation of) biological background knowledge. The upper table describes the ``no-ICAM'' batch of conditions that is most commonly used in the literature. The lower table describes the additional ``ICAM'' batch of conditions that we also use here.\label{sachsexperiments}}
\medskip
\centerline{no-ICAM: \quad \begin{tabular}{|lll|ll|}
\hline
\multicolumn{3}{|c|}{Reagents} & \multicolumn{2}{c|}{Intervention} \\
\hline
$\alpha$-CD3, $\alpha$-CD28 & ICAM-2 & Additional & Target & Type \\
\hline
+ & - &- & - & (observational) \\
+ & - &AKT inhibitor & AKT & activity \\
+ & - &G0076 & PKC & activity \\
+ & - &Psitectorigenin & PIP2 & abundance \\
+ & - &U0126 & MEK & activity \\
+ & - &LY294002 & PIP2/PIP3 & mechanism change\\
\hline
- & - &PMA & PKC & activity + fat-hand \\
- & - &$\beta$2CAMP & PKA & activity + fat-hand \\
\hline
\end{tabular}}
\bigskip
\centerline{\phantom{no-{}}ICAM: \quad \begin{tabular}{|lll|ll|}
\hline
\multicolumn{3}{|c|}{Reagents} & \multicolumn{2}{c|}{Intervention} \\
\hline
$\alpha$-CD3, $\alpha$-CD28 & ICAM-2 & Additional & Target & Type \\
\hline
+ & + &- & - & (observational) \\
+ & + &AKT inhibitor & AKT & activity \\
+ & + &G0076 & PKC & activity \\
+ & + &Psitectorigenin & PIP2 & abundance \\
+ & + &U0126 & MEK & activity \\
+ & + &LY294002 & PIP2/PIP3 & mechanism change\\
\hline
- & - &PMA & PKC & activity + fat-hand \\
- & - &$\beta$2CAMP & PKA & activity + fat-hand \\
\hline
\end{tabular}}
\end{table}

In this paper, we ignore the fact that in the last two interventional datasets in each batch (PMA and $\beta$2CAMP) there is also a global intervention. Ignoring the global intervention allows us to compute the weighted ancestral relations, since we consider any variable that changes its distribution with respect to the observational condition to be an effect of the main target of the intervention (PKC for PMA and PKA for $\beta$2CAMP). This is in line with previous work \cite{SPP05,MooijHeskes_UAI_13}. 
Also, we consider only PIP3 as the main target of the LY294002 intervention, based on the consensus network \cite{SPP05}, even though in \cite{MooijHeskes_UAI_13} both PIP2 and PIP3 are considered to be targets of this intervention. In future work, we plan to extend ACI in order to address the task of learning the intervention targets from data, as done by \cite{EatonMurphy07} for a score-based approach.

In the main paper we provide some results for the most commonly used no-ICAM batch of experimental conditions. Below we report additional results on the same batch. Moreover, we provide results for causal discovery on the ICAM batch, which are quite consistent with the no-ICAM batch. Finally, we compare with other methods that were applied to this dataset, especially with a score-based approach (\cite{MooijHeskes_UAI_13}) that shows surprisingly similar results to ACI, although it uses a very different method.

\begin{figure*}[t!]
\centering     %
\subfigure[Independences of order 0]{\label{fig:noicam_indep}
\includegraphics[width=0.32\textwidth]{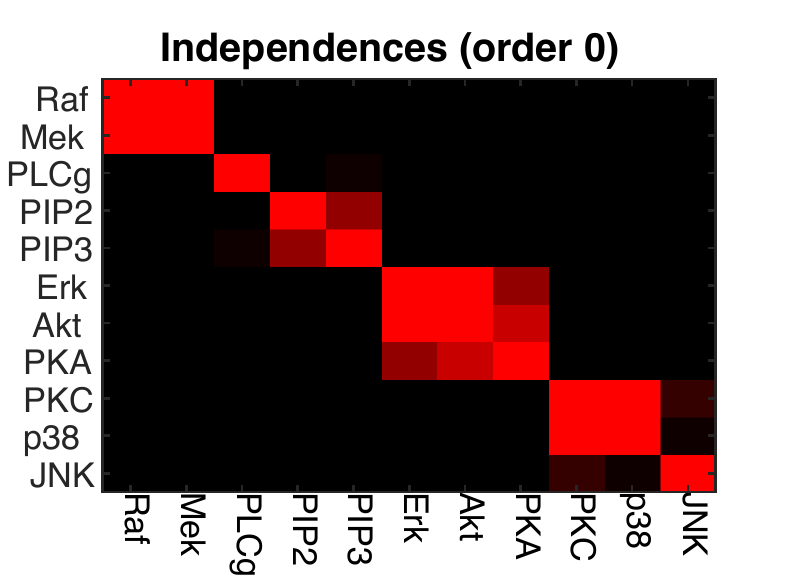}}
\subfigure[Weighted ancestral relations]{\label{fig:noicam_causes}\includegraphics[width=0.32\textwidth]{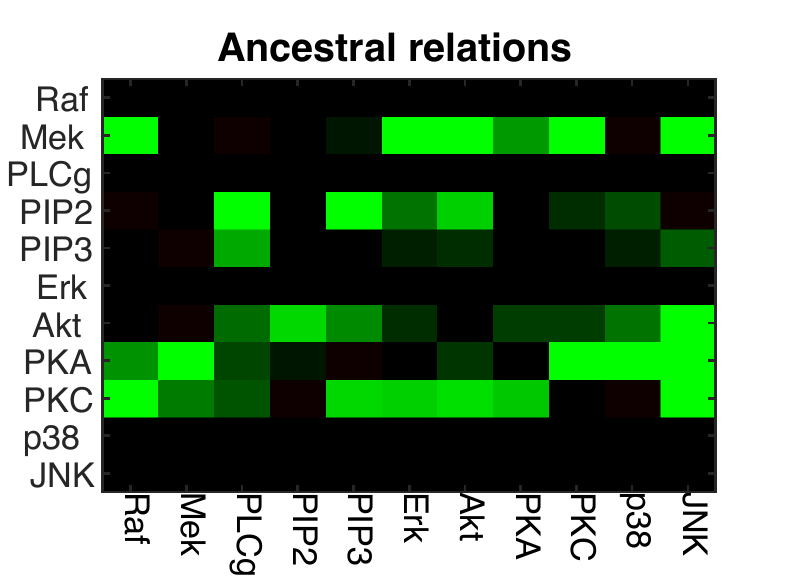}}
\subfigure{\includegraphics[width=0.07\textwidth]{legend.pdf}}
\\
\subfigure[ACI (input: independences order $\leq 1$)]{\label{fig:noicam_aci_indep}
\includegraphics[width=0.3\textwidth]{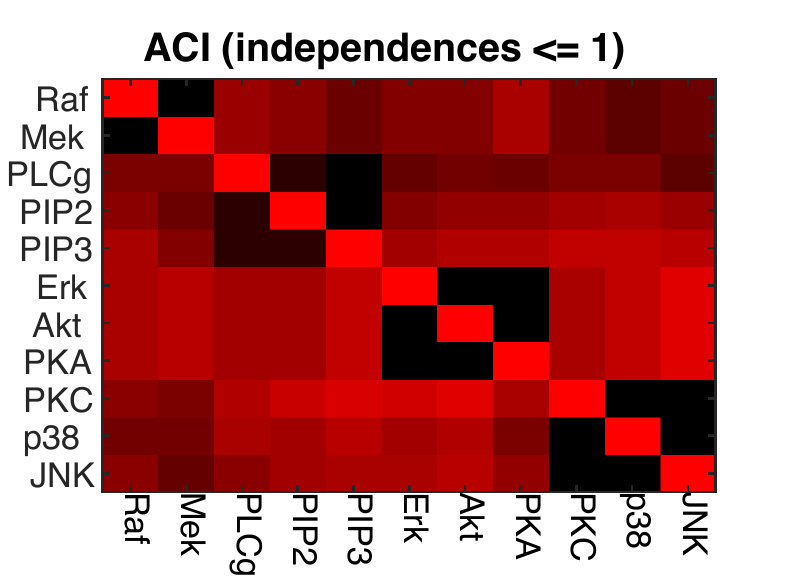}}\quad
\subfigure[ACI (input: weighted ancestral relations)]{\label{fig:noicam_aci_causes}\includegraphics[width=0.3\textwidth]{sachs_aci_anc.pdf}}\quad
\subfigure[ACI (input: independences order $\leq 1$, weighted ancestral relations)]{\label{fig:noicam_aci_both}
\includegraphics[width=0.3\textwidth]{sachs_aci_both.pdf}}
\\
\subfigure[Bootstrapped (100) anytime FCI (input: independences order $\leq 1$)]{\label{fig:noicam_bfci}
\includegraphics[width=0.3\textwidth]{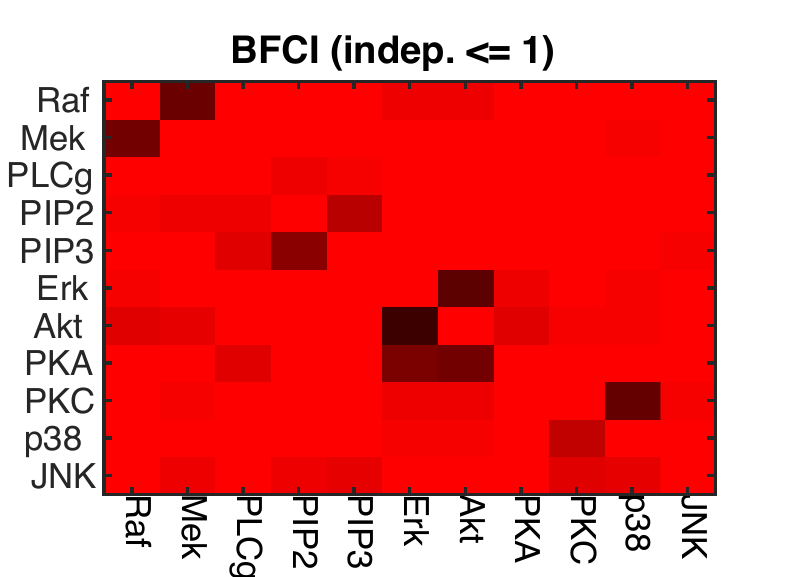}}\quad
\subfigure[Bootstrapped (100) anytime CFCI (input: independences order $\leq 1$)]{\label{fig:noicam_bcfci}\includegraphics[width=0.3\textwidth]{BCFCI_noicam.pdf}}
\caption{\label{fig:noicam}Results on flow cytometry dataset, no-ICAM batch. 
The top row represents some of the possible inputs: weighted independences of order 0 from the observational dataset (the inputs include also order 1 test results, but these are not visualized here) and weighted ancestral relations recovered from comparing the interventional datasets with the observational data.
In the bottom two rows each matrix represents the ancestral relations that are estimated using different inputs and different methods (ACI, bootstrapped anytime FCI or CFCI). Each row represents a cause, while the columns are the effects. The colors encodes the confidence levels, green is positive, black is unknown, while red is negative. The intensity of the color represents the degree of confidence.}
\end{figure*}

\subsection{Results on no-ICAM batch}

In Figure \ref{fig:noicam} we provide additional results for the no-ICAM batch. In the first row we show some of the possible inputs: weighted independences (in this case partial correlations) from observational data and weighted ancestral relations from comparing the interventional datasets with the observational data. Specifically, we consider as inputs only independences up to order $1$ (but only independences of order 0 are visualized in the figure). The color encodes the weight of the independence. As an example, the heatmap shows that Raf and Mek are strongly dependent.

For the weighted ancestral relations, in Figure \ref{fig:noicam} we plot a matrix in which each row represents a cause, while the columns are the effects. 
As described in the main paper we use a $t$-test to test for each intervention and for each variable whether its distribution changes with respect to the observational condition. We use the biological knowledge summarised in Table \ref{sachsexperiments} to define the intervention target, which is then considered the putative ``cause''.
Then we use the $p$-values of these tests and a threshold $\alpha = 0.05$ to obtain the weights of the ancestral relations, similarly to what is proposed in the main paper for the frequentist weights for the independence tests:
\begin{equation*}
w = |\log p - \log \alpha|.
\end{equation*}
For example, if adding U0126 (which is known to be a MEK inhibitor) changes the distribution of RAF with $p=0.01$ with respect to the observational baseline, we get a 
weighted ancestral relation (MEK$\causes$RAF, 1.609). 

\begin{figure*}[t!]
\centering     %
\subfigure[Independences of order 0]{\label{fig:icam_indep}
\includegraphics[width=0.32\textwidth]{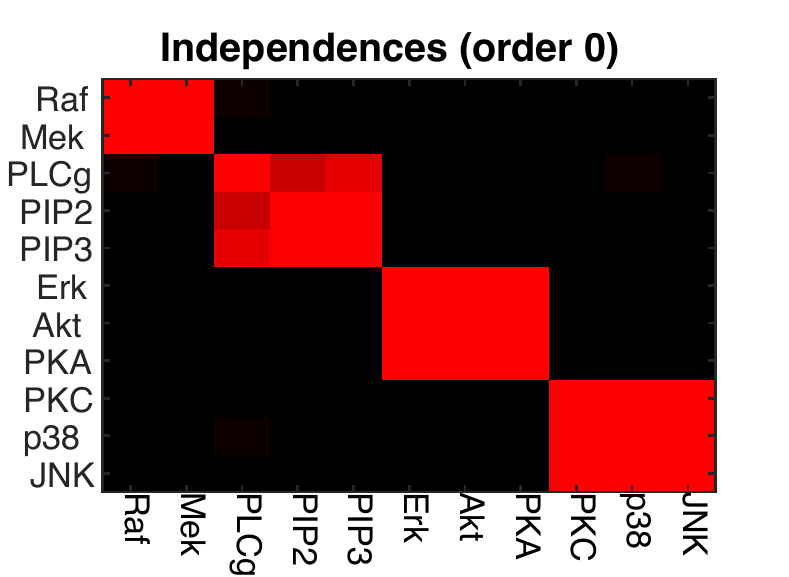}}
\subfigure[Weighted ancestral relations]{\label{fig:icam_causes}\includegraphics[width=0.32\textwidth]{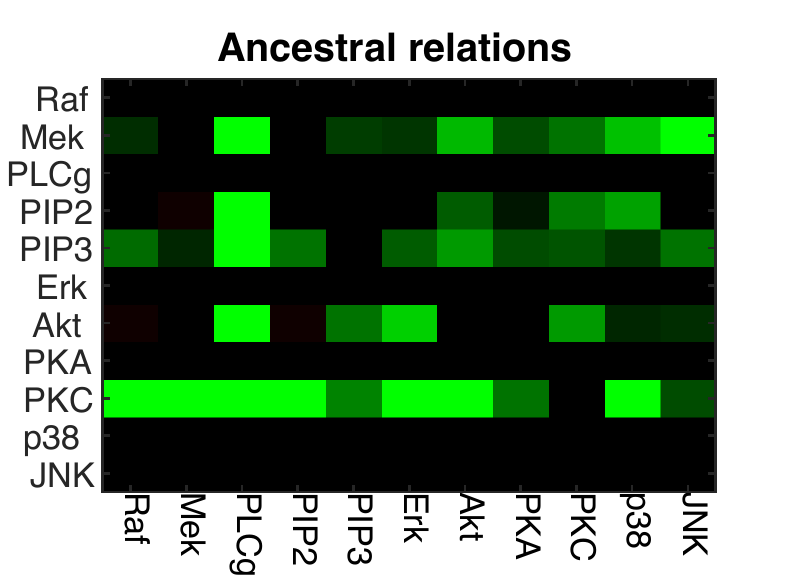}}
\subfigure{\includegraphics[width=0.07\textwidth]{legend.pdf}}
\\
\subfigure[ACI (input: independences order $\leq 1$)]{\label{fig:icam_aci_indep}
\includegraphics[width=0.3\textwidth]{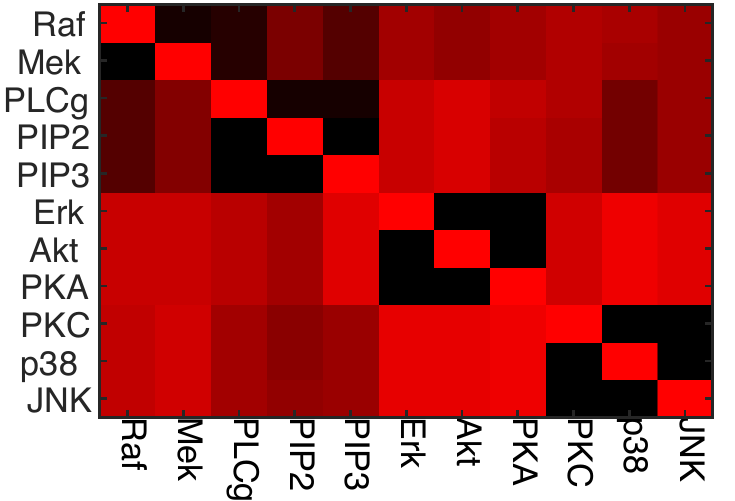}}\quad
\subfigure[ACI (input: weighted ancestral relations)]{\label{fig:icam_aci_causes}\includegraphics[width=0.3\textwidth]{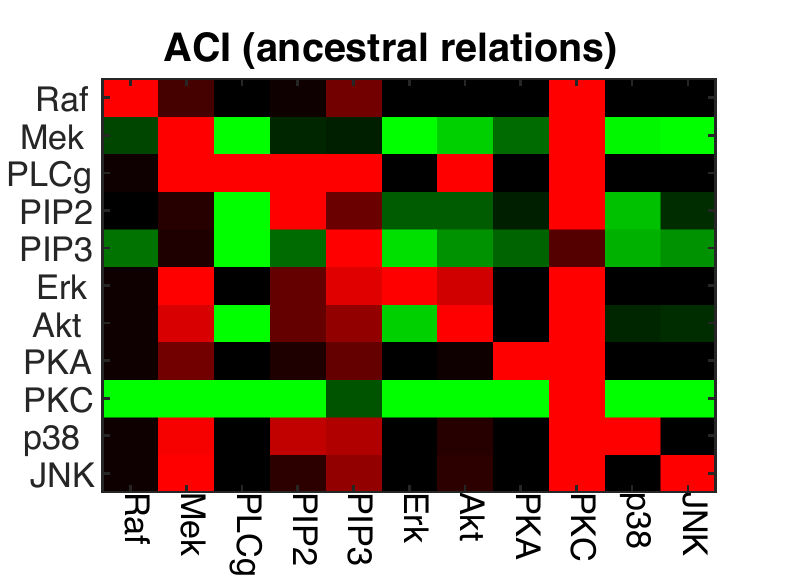}}\quad
\subfigure[ACI (input: independences order $\leq 1$, weighted ancestral relations)]{\label{fig:icam_aci_both}
\includegraphics[width=0.3\textwidth]{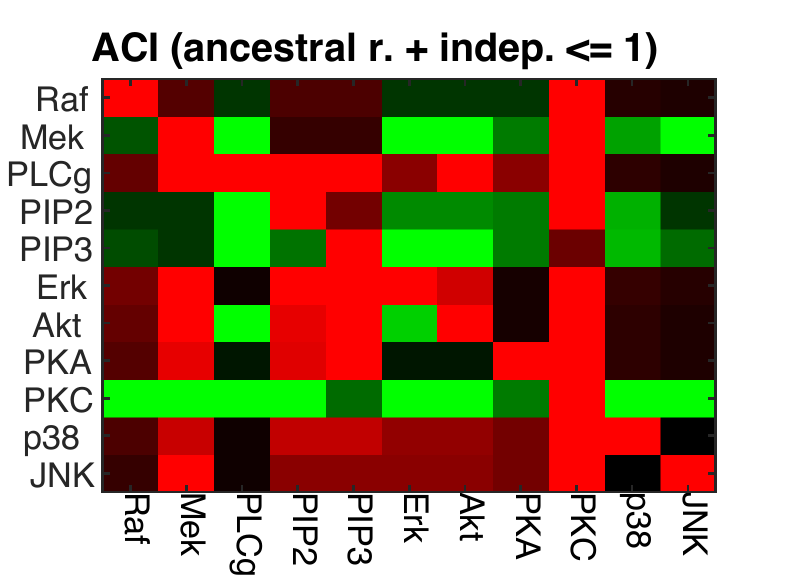}}
\\
\subfigure[Bootstrapped (100) anytime FCI(input: independences order $\leq 1$)]{\label{fig:icam_bfci}
\includegraphics[width=0.3\textwidth]{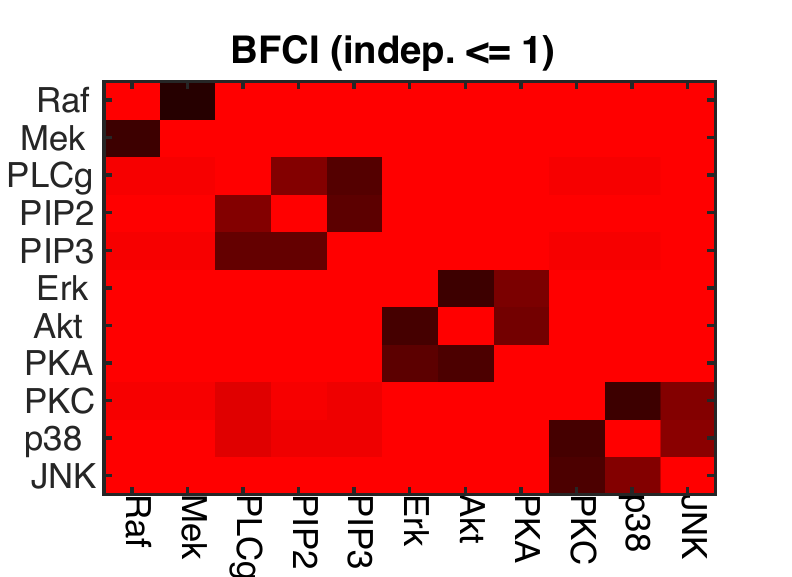}}\quad
\subfigure[Bootstrapped (100) anytime CFCI (input: independences order $\leq 1$)]{\label{fig:icam_bcfci}\includegraphics[width=0.3\textwidth]{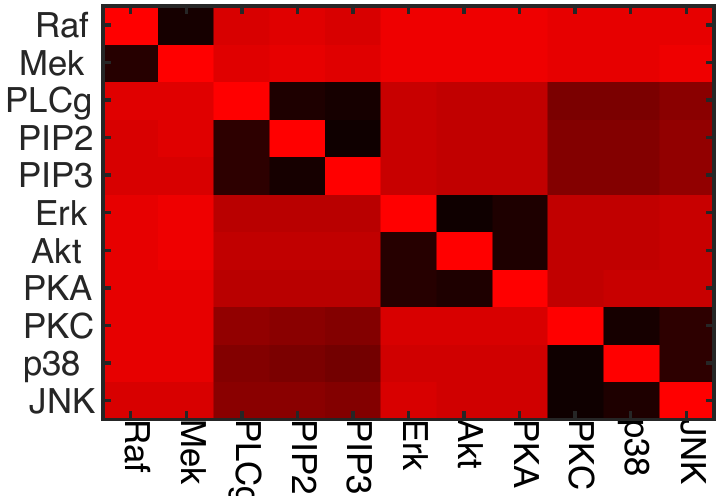}}
  \caption{\label{fig:icam}Results on flow cytometry dataset, ICAM batch. Same comparison as in Figure \ref{fig:noicam}, but for the ICAM batch.}
\end{figure*}

\subsection{ICAM batch}
In Figure \ref{fig:icam} we show the results for the ICAM setting. These results are very similar to the results for the no-ICAM batch (see also Figure~\ref{fig:noicamicam}), showing that the predicted ancestral relations are robust. In particular it is clear that also for the ICAM batch, weighted ancestral relations are a very strong signal, and that methods that can exploit them (e.g., ACI) have a distinct advantage over methods that cannot (e.g., FCI and CFCI).

In general, in both settings there appear to be various faithfulness violations.
For example, it is well-known that MEK causes ERK, yet in the
observational data these two variables are independent. Nevertheless, we can
see in the data that an intervention on MEK leads to a change of ERK, as
expected. It is interesting to note that our approach can correctly recover
from this faithfulness violation because it takes into
account the weight of the input statements (note that the weight of the independence
is smaller than that of the ancestral relation, which corresponds with a quite 
significant change in distribution). 
In contrast, methods that start by reconstructing the skeleton (like (C)FCI or LoCI \cite{loci}) would decide that MEK and ERK are nonadjacent, unable to recover from that 
erroneous decision. This illustrates one of the advantages of our approach.

\begin{figure}[t]
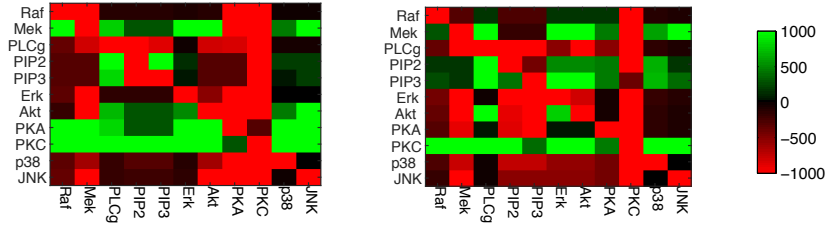

\centering
\includegraphics[width=0.3\textwidth]{sachs_aci_both.pdf}
\qquad
\includegraphics[width=0.3\textwidth]{icam_aci.pdf}
\qquad
\includegraphics[width=0.07\textwidth]{legend.pdf}
  \caption{ACI results (input: independences of order $\leq 1$ and weighted ancestral relations) on no-ICAM (left) and ICAM (right) batches. These heatmaps are identical to the ones in Figures~\ref{fig:noicam} and \ref{fig:icam}, but are reproduced here next to each other for easy comparison.\label{fig:noicamicam}}
\end{figure}

\subsection{Comparison with other approaches}
We also compare our results with other, mostly score-based approaches. 
Amongst other results, \cite{MooijHeskes_UAI_13} report the top 17 direct causal relations on the no-ICAM batch that were inferred by their score-based method when assuming acyclicity. In order to compare fairly with the ancestral relations found by ACI, we first perform a transitive closure of these direct causal relations, which results in 21 ancestral relations. We then take the top 21 predicted ancestral relations from ACI (for the same no-ICAM batch), and compare the two in Figure \ref{fig:comparisonMooij13}. The black edges, the majority, represent the ancestral relations found by both methods.  The blue edges are found only by ACI, while the grey edges are found only by \cite{MooijHeskes_UAI_13}. Interestingly, the results are quite similar, despite the very different approaches. In particular, ACI allows for confounders and is constraint-based, while the method in \cite{MooijHeskes_UAI_13} assumes causal sufficiency (i.e., no confounders) and is score-based.

\begin{figure*}[t!]
\centering 
\includegraphics[width=0.8\textwidth]{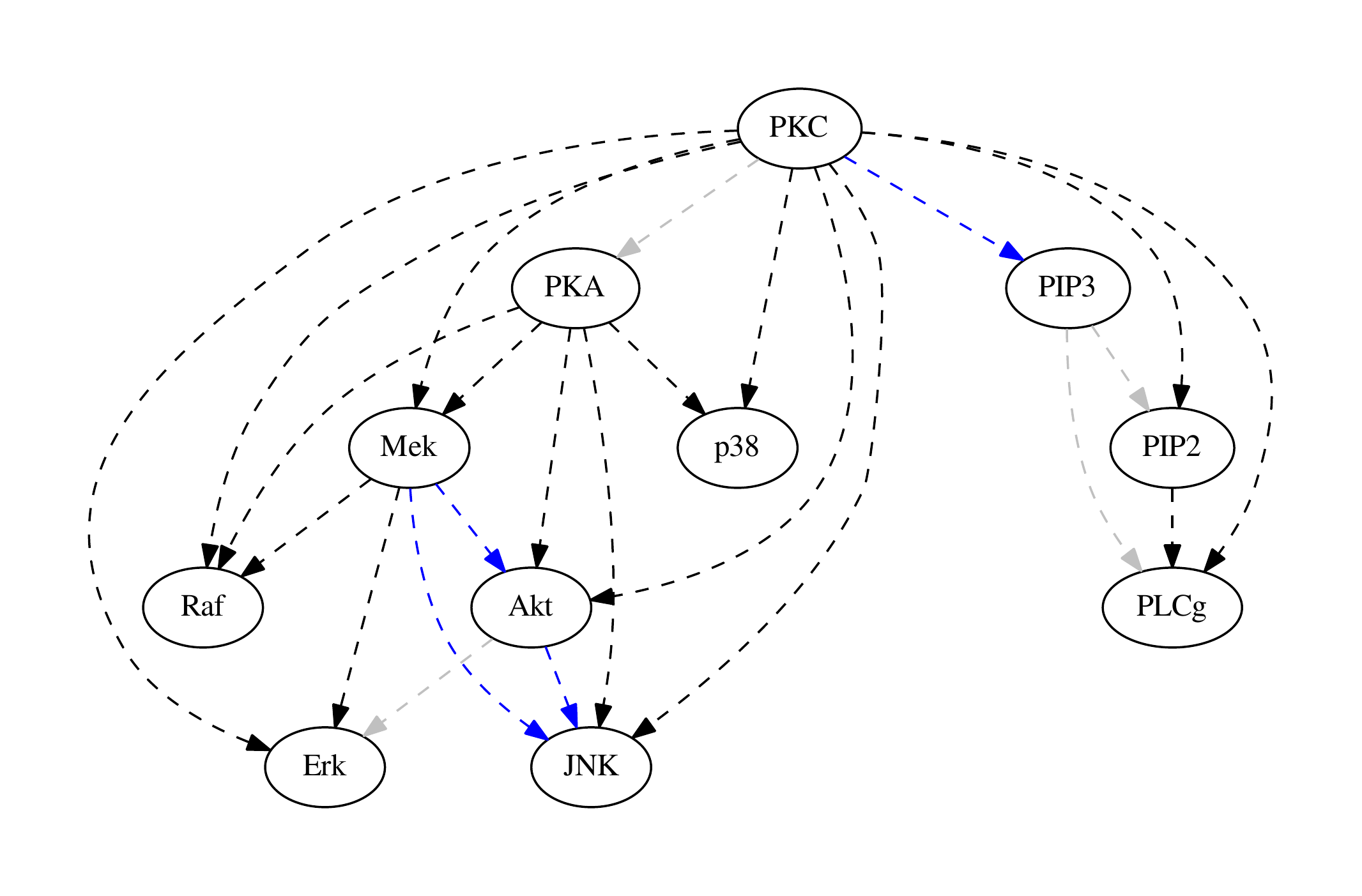}
  \caption{Comparison of ancestral relations predicted by ACI and the score-based method from \cite{MooijHeskes_UAI_13}, both using the no-ICAM batch. Depicted are the top 21 ancestral relations obtained by ACI and the transitive closure of the top 17 direct causal relations reported in \cite{MooijHeskes_UAI_13}, which results in 21 ancestral relations. Black edges are ancestral relations found by both methods, blue edges were identified only by ACI, while grey edges are present only in the transitive closure of the result from \cite{MooijHeskes_UAI_13}.\label{fig:comparisonMooij13}}
\end{figure*}

Table \ref{tab:comparisonSachs} summarizes most of the existing work on this flow cytometry dataset. It was originally part of the S1 material of \cite{ICP_PNAS}. We have updated it here by adding also the results for ACI and the transitive closure of \cite{MooijHeskes_UAI_13}.

\begin{table*}
\caption{Updated Table S1 from \cite{ICP_PNAS}: causal relationships between the biochemical agents in the flow cytometry data of \cite{SPP05}, according to different causal discovery methods. The consensus network according to \cite{SPP05} is denoted here by ``\cite{SPP05}a'' and their reconstructed network by ``\cite{SPP05}b''. For \cite{MooijHeskes_UAI_13} we provide two versions: ``\cite{MooijHeskes_UAI_13}a'' for the top 17 edges in the acyclic case, as reported in the original paper, and ``\cite{MooijHeskes_UAI_13}b'' for its transitive closure, which consists of 21 edges.
To provide a fair comparison, we also pick the top 21 ancestral predictions from ACI.\label{tab:comparisonSachs}}
{\small
\begin{tabular}{l|ccccc|ccc}
\hline
  & \multicolumn{5}{|c|}{Direct causal predictions} & \multicolumn{3}{|c}{Ancestral predictions} \\%
Edge & \cite{SPP05}a & \cite{SPP05}b & \cite{MooijHeskes_UAI_13}a  & \cite{EatonMurphy07} & ICP \cite{ICP} & hiddenICP \cite{ICP} & \cite{MooijHeskes_UAI_13}b & ACI (top 21)\\
\hline
RAF$\to$MEK   & $\ck$ & $\ck$ &         	&
	        &          & $\ck$ &          & 
\\
MEK$\to$RAF   &				&           & $\ck$ & 
$\ck$ 	&          & $\ck$ &  $\ck$ &
$\ck$ \\
MEK$\to$ERK   & $\ck$ & $\ck$ & $\ck$ & 
            &         &            & $\ck$ &
$\ck$\\
MEK$\to$AKT   & 		   &  		   &  		   & 
            &         &            &  		    &
$\ck$\\
MEK$\to$JNK   & 		   &  		   &  		   & 
            &         &            &  		    &
$\ck$\\
PLCg$\to$PIP2 & $\ck$ & $\ck$ &          &	
$\ck$ 	& $\ck$ & $\ck$ & 		     & 
\\
PLCg$\to$PIP3 &           & $\ck$ &          &
$\ck$ 	 &         &           &			 &  
\\
PLCg$\to$PKC  & $\ck$ &           &		  &
$\ck$ 	 &         &            & 			 &
\\
PIP2$\to$PLCg &            &          & $\ck$&
			 &$\ck$&            & $\ck$ &
$\ck$\\
PIP2$\to$PIP3  &            &           &         &
$\ck$ 	 &         &            &			 &
\\
PIP2$\to$PKC   & $\ck$  &          &          &
			 &         &             &			  &
\\
PIP3$\to$PLCg & $\ck$  &           &          & 
			 &         &            &$\ck$ &
\\
PIP3$\to$PIP2  & $\ck$  & $\ck$ & $\ck$ &  
            &$\ck$& $\ck$   & $\ck$ &
\\
PIP3$\to$AKT  & $\ck$   &          &           & 
		     &         &             &          &
\\
AKT$\to$ERK   &             &         & $\ck$ &
            &$\ck$& $\ck$   &	$\ck$ &
\\
AKT$\to$JNK   & 		   &  		   &  		   & 
            &         &            &  		    &
$\ck$\\
ERK$\to$AKT   &             & $\ck$ &         & 
$\ck$ 	 & $\ck$ & $\ck$ &				&
\\
ERK$\to$PKA   &              &          &          & 
$\ck$   &    		 &            	&            &
\\
PKA$\to$RAF   & $\ck$    & $\ck$ &          &
            &          &              &   $\ck$ &    
$\ck$ \\
PKA$\to$MEK   & $\ck$    & $\ck$ & $\ck$ & 
$\ck$  &           & $\ck$    &  $\ck$ &
$\ck$\\
PKA$\to$ERK   & $\ck$     & $\ck$ &          &
          & $\ck$ &               &  $\ck$ &
$\ck$\\
PKA$\to$AKT   & $\ck$    & $\ck$ & $\ck$ &
$\ck$ &            & $\ck$    &  $\ck$ &
$\ck$\\
PKA$\to$PKC   &              &           &           & 
$\ck$ &            &             &			&
\\
PKA$\to$P38   & $\ck$ 	  & $\ck$  & $\ck$ &
          &            &             & $\ck$ &
$\ck$\\
PKA$\to$JNK   & $\ck$ 	  & $\ck$  & $\ck$ &
$\ck$ &            &             & $\ck$ &
$\ck$ \\
PKC$\to$RAF    & $\ck$   & $\ck$ & $\ck$  & 
         &              &            &$\ck$ & 
$\ck$\\
PKC$\to$MEK   & $\ck$ & $\ck$ & $\ck$ &
$\ck$ &          &           &  $\ck$ & 
$\ck$\\
PKC$\to$PLCg  &              &              & $\ck$  &     
         &              &              &$\ck$ & 
$\ck$\\
PKC$\to$PIP2  &              &              & $\ck$  &
         &              &              &  $\ck$ &          
$\ck$\\
PKC$\to$PIP3   &              &           &           & 
 		&            &             &			&
$\ck$\\
PKC$\to$ERK   &              &              &   	      & 
       &              &               & $\ck$ &
$\ck$\\
PKC$\to$AKT   &              &              & $\ck$ & 
       &              &               & $\ck$ &
$\ck$\\
PKC$\to$PKA   &              & $\ck$ & $\ck$ &  
            &              &              & $\ck$ &
\\
PKC$\to$P38   & $\ck$ & $\ck$ & $\ck$ & 
$\ck$ &              & $\ck$ &  $\ck$ & 
$\ck$ \\
PKC$\to$JNK   & $\ck$ & $\ck$ & $\ck$ &
$\ck$ & $\ck$ & $\ck$ &  $\ck$ &
$\ck$ \\
P38$\to$JNK   &              &              &              &
             &              & $\ck$ &      &
\\
P38$\to$PKC   &              &              &              & 
            &              & $\ck$ &          &
\\
JNK$\to$PKC   &              &              &              & 
           &              & $\ck$ &              &
\\
JNK$\to$P38   &              &              &              & 
$\ck$ &              & $\ck$ &             &
 \\
\hline
\end{tabular}
}
\end{table*}

\section{Mapping ancestral structures to direct causal relations}
An ancestral structure can be seen as the transitive closure of the directed edges of an acyclic directed mixed graph (ADMG). 
There are several strategies to reconstruct ``direct'' causal relations from an ancestral structure, in particular in combination with our scoring method. Here we sketch a possible strategy, but we leave a more in-depth investigation to future work.

\begin{figure*}[t!]
\centering     %
\subfigure[PR direct]{
\includegraphics[width=0.47\textwidth]{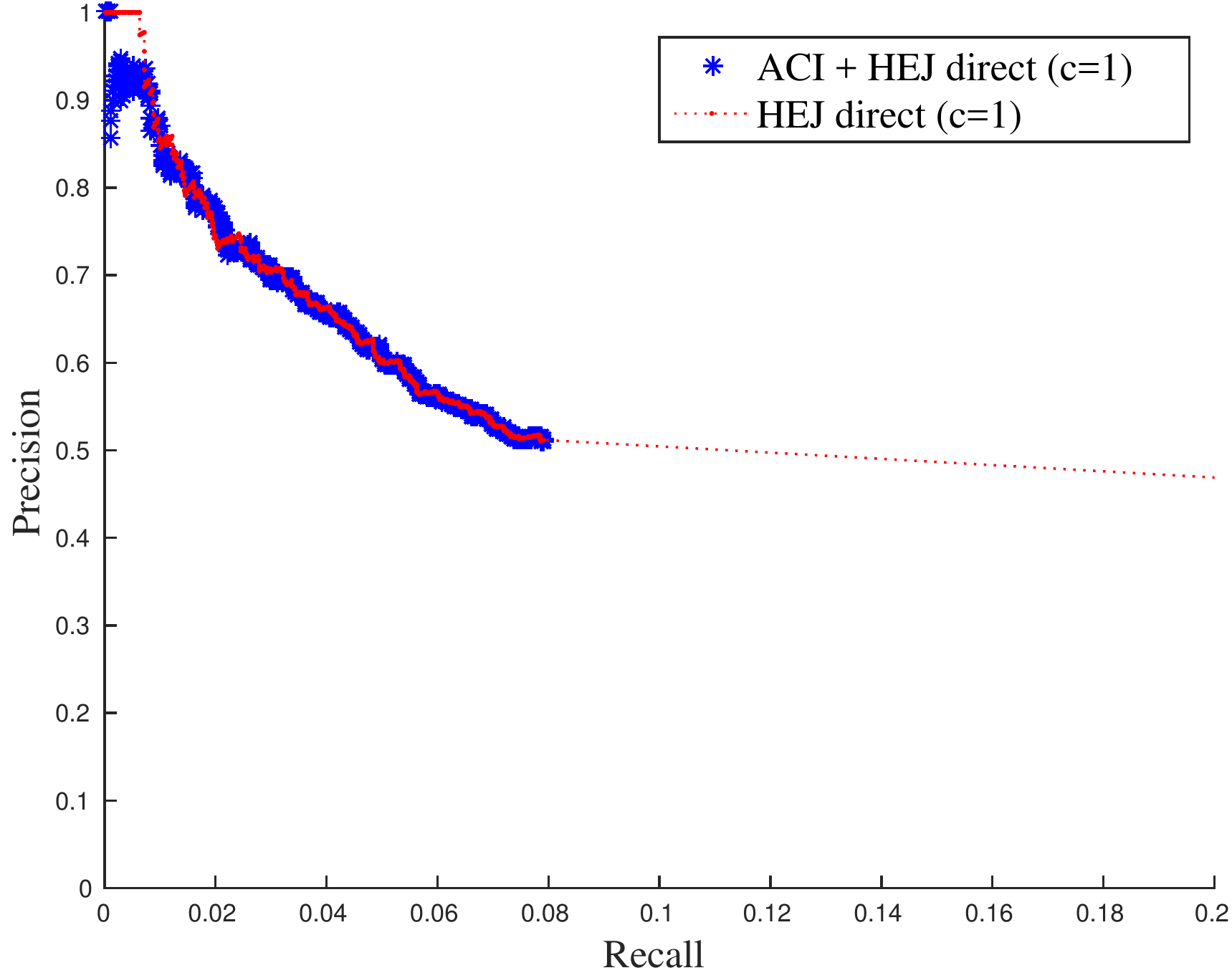}}
\subfigure[PR direct (zoom)]{
\includegraphics[width=0.47\textwidth]{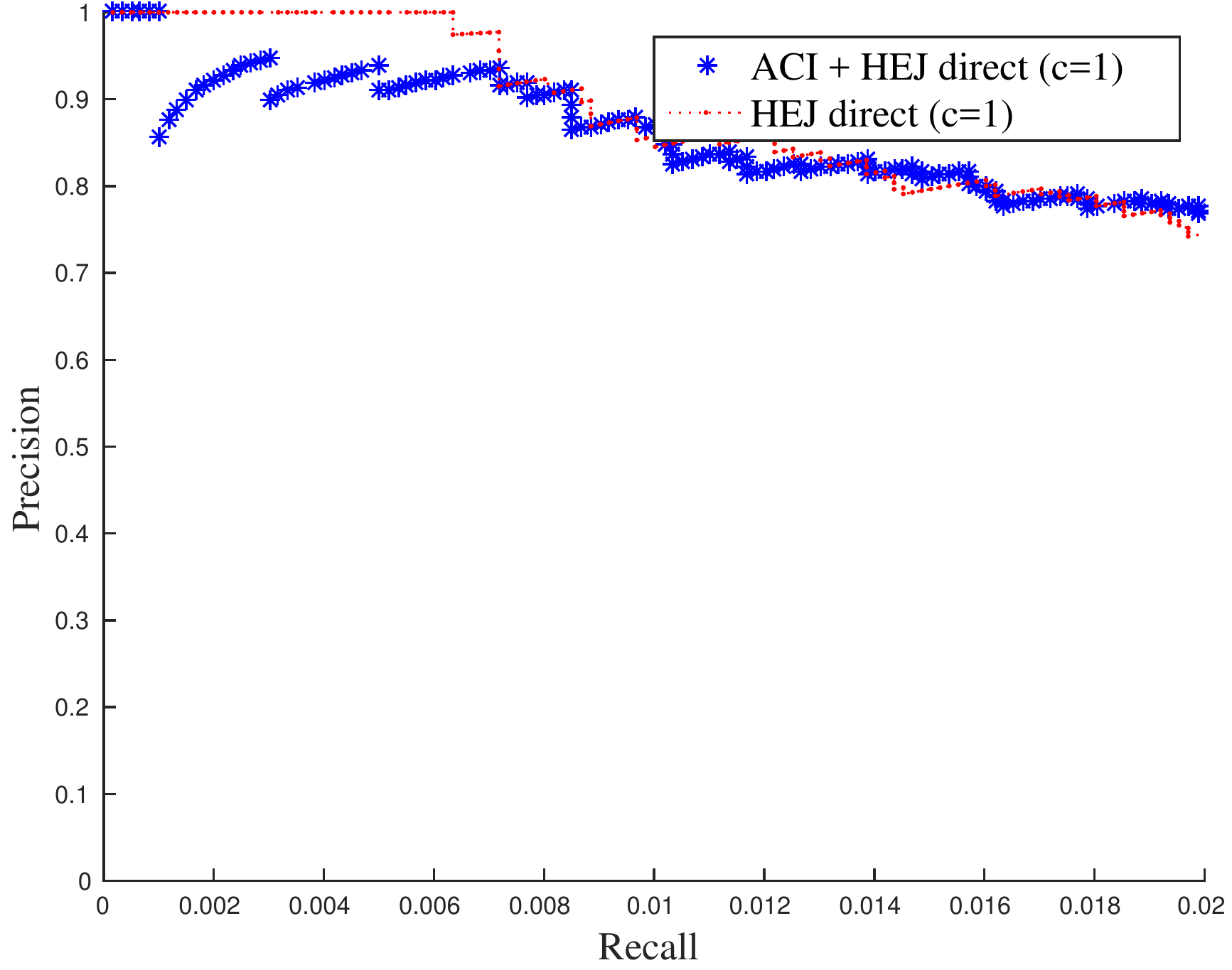}}
\\
\subfigure[PR direct acausal]{
\includegraphics[width=0.47\textwidth]{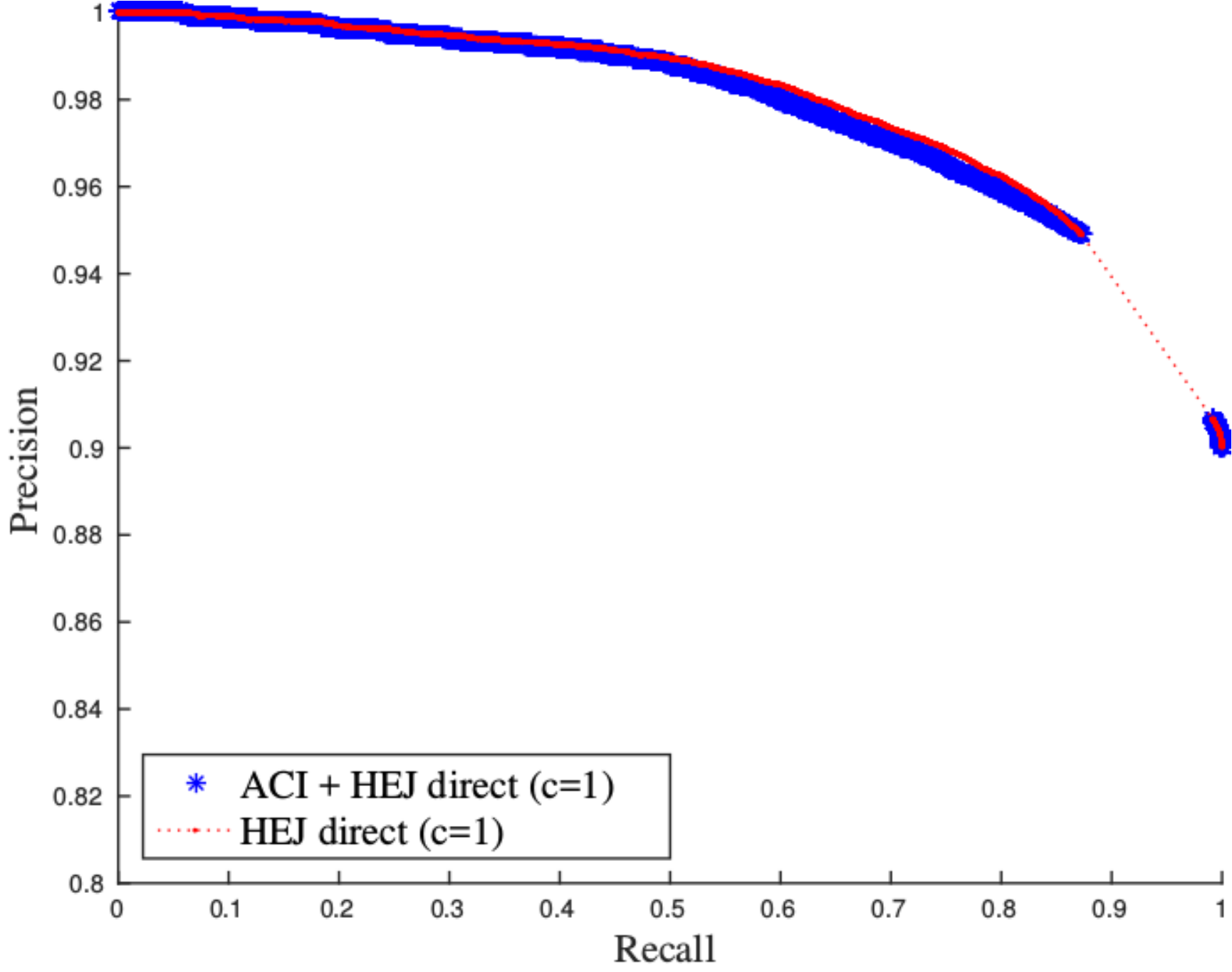}}
\caption{\label{fig:direct_acc}Synthetic data: accuracy for the two prediction tasks (direct causal and noncausal relations) for $n=6$ variables using the frequentist test with $\alpha=0.05$ for 2000 simulated models.}
\end{figure*}

\begin{table}
    \centering%
    \caption{Average execution times for recovering causal relations with different strategies for 2000 models for $n=6$ variables using the frequentist test with $\alpha=0.05$.\\\label{extime2}}
    \begin{tabular}{ |l|l|l|l|| l ||l|}%
    \hline%
      \multicolumn{6}{|c|}{\textbf{Average execution time (s)}} \\%
            \hline \hline
      \multicolumn{2}{|c|}{Setting}  &
      \multicolumn{2}{|c||}{\textbf{Direct causal relations}} &
      \multicolumn{1}{|c||}{\textbf{Only second step}} &
      \multicolumn{1}{|c|}{\textbf{Ancestral relations}} \\%
      \hline
    $n$ &$c$& \name\ with restricted HEJ & direct HEJ & restricted HEJ & ancestral HEJ\\ \hline
    6 & 1 & 9.77 & 15.03 & 7.62 & 12.09\\ \hline
    6 & 4 & 16.96 & 314.29 & 14.43 & 432.67\\ \hline
    7 & 1 & 36.13 & 356.49 & 30.68 &	715.74\\ \hline
    8 & 1 & 98.92 & $\geq 2500$ & 81.73 &  $\geq 2500 $\\ \hline
    9 & 1 & 361.91 &  $\geq 2500$ & 240.47 &  $\geq 2500$ \\ \hline
    \end{tabular}
\end{table}

A possible strategy is to first recover the ancestral structure from ACI with our scoring method and then use it as ``oracle'' input constraints for the HEJ \cite{antti} algorithm. Specifically, for each weighted output $(X \causes Y, w)$ obtained by ACI, we add $(X \causes Y, \infty)$ to the input list $I$, and similarly for each $X \notcauses Y$. Then we can use our scoring algorithm with HEJ to score direct causal relations (e.g., $f = X \to Y$) and direct acausal relations (e.g., $f= X \not \to Y$):
\begin{equation}
  C(f) = \min_{W \in \mathcal{W}} \loss(W;I \cup \{(\lnot f,\infty)\}) - \min_{W \in \mathcal{W}} \loss(W;I \cup \{(f,\infty)\}).
\end{equation}
In the standard HEJ algorithm, $\mathcal{W}$ are all possible ADMGs, but with our additional constraints we can reduce the search space to only the ones that fit the specific ancestral structure, which is on average and asymptotically a reduction of $2^{n^2/4 + o(n^2)}$ for $n$ variables. We will refer to this two-step approach as \emph{ACI with restricted HEJ (ACI + HEJ)}. A side effect of assigning infinite scores to the original ancestral predictions instead of the originally estimated scores is that some of the estimated direct causal predictions scores will also be infinite, flattening their ranking. For this preliminary evaluation, we fix this issue by reusing the original ancestral scores also for the infinite direct predictions scores. Another option may be to use the ACI scores for (a)causal relations as soft constraints for HEJ, although at the time of writing it is still unclear whether this would lead to the same speedup as the previously mentioned version.

We compared accuracy and execution times of standard HEJ (without the additional constraints derived from ACI) with ACI with restricted HEJ on simulated data. Figure~\ref{fig:direct_acc} shows PR curves for predicting the presence and absence of direct causal relations for both methods. In Table~\ref{extime2} we list the execution times for recovering direct causal relations. Additionally, we list the execution times of only the second step of our approach, the \emph{restricted HEJ}, to highlight the improvement in execution time resulting from the restrictions. In this preliminary investigation with simulated data, ACI with restricted HEJ is much faster than standard HEJ (without the additional constraints derived from ACI) for predicting direct causal relations, but only sacrifices a little accuracy (as can be seen in Figure \ref{fig:direct_acc}). In the last column of Table~\ref{extime2}, we show the execution times of standard HEJ when used to score ancestral relations. Interestingly, predicting direct causal relations is faster than predicting ancestral relations with HEJ. Still, for 8 variables the algorithm takes more than 2,500 seconds for all but 6 models of the first 40 simulated models.

Another possible strategy first reconstructs the (possibly incomplete) PAG \cite{Spirtes2000} from ancestral relations and conditional (in)dependences using a procedure similar to LoCI \cite{loci}, and then recovering direct causal relations. There are some subtleties in the conversion from (possibly incomplete) PAGs to direct causal relations, so we leave this and other PAG based strategies, as well as a better analysis of conversion of ancestral relations to direct causal relations as future work.

\section{Complete ACI encoding in ASP}
Answer Set Programming (ASP) is a widely used declarative programming language based on the stable model semantics of logical programming. A thorough introduction to ASP can be found in \cite{DBLP:conf/aaai/Lifschitz08,DBLP:reference/fai/Gelfond08}. 
The ASP syntax resembles Prolog, but the computational model is based on the principles that have led to faster solvers for propositional logic \cite{DBLP:conf/aaai/Lifschitz08}. 
ASP has been applied to several NP-hard problems, including learning Bayesian networks and ADMGs \cite{antti}. Search problems are reduced to computing the stable models (also called answer sets), which can be optionally scored.

For ACI we use the state-of-the-art ASP solver \texttt{clingo 4} \cite{clingo}. We provide the complete ACI encoding in ASP using the clingo syntax in Table \ref{tab:encoding}.
We encode sets via their natural correspondence with binary numbers and use boolean formulas in ASP to encode set-theoretic operations. Since ASP does not support real numbers, we scale all weights by a factor of 1000 and round to the nearest integer.

\begin{table*}[p]
\caption{Complete ACI encoding in Answer Set Programming, written in the syntax for clingo 4.\label{tab:encoding}}
\begin{lstlisting}
%%%%%%%%%%%%%%%%%%%%%%%%%%%%%%%%%%%%%%%%%%%%%%%%
%%%%%  Ancestral Causal Inference (ACI)    %%%%%
%%%%%%%%%%%%%%%%%%%%%%%%%%%%%%%%%%%%%%%%%%%%%%%%

%%%%% Preliminaries:
%%% Define ancestral structures:
{ causes(X,Y) } :- node(X), node(Y), X!=Y.
:- causes(X,Y), causes(Y,X), node(X), node(Y), X < Y.
:- not causes(X,Z), causes(X,Y), causes(Y,Z), node(X), node(Y), node(Z).

%%% Define the extension of causes to sets.
% existsCauses(Z,W) means there exists I \in W that is caused by Z.
1{causes(Z, I): ismember(W,I)} :- existsCauses(Z,W), node(Z), set(W), not ismember(W,Z).
existsCauses(Z,W) :- causes(Z, I), ismember(W,I), node(I), node(Z), set(W), not ismember(W,Z), Z!=I.

%%% Generate in/dependences in each model based on the input in/dependences.
1{ dep(X,Y,Z);indep(X,Y,Z) }1 :- input_indep(X,Y,Z,_).
1{ dep(X,Y,Z);indep(X,Y,Z) }1 :- input_dep(X,Y,Z,_).

%%% To simplify the rules, add symmetry of in/dependences.
dep(X,Y,Z) :- dep(Y,X,Z), node(X), node(Y), set(Z), X!=Y, not ismember(Z,X), not ismember(Z,Y).
indep(X,Y,Z) :- indep(Y,X,Z), node(X), node(Y), set(Z), X!=Y, not ismember(Z,X), not ismember(Z,Y).

%%%%% Rules from LoCI:
%%% Minimal independence rule (4) : X || Y | W u [Z] => Z -/-> X, Z -/-> Y, Z -/-> W
:- not causes(Z,X), not causes(Z,Y), not existsCauses(Z,W), dep(X,Y,W), indep(X,Y,U), 
U==W+2**(Z-1), set(W), node(Z), not ismember(W, Z), Y != Z, X != Z.

%%% Minimal dependence rule (5): X |/| Y | W u [Z] => Z --> X or Z-->Y or Z-->W 
:- causes(Z,X), indep(X,Y,W), dep(X,Y,U), U==W+2**(Z-1), set(W), set(U), node(X),
 node(Y), node(Z), not ismember(W, Z), not ismember(W, X), not ismember(W,Y), 
 X != Y, Y != Z, X != Z.
% Note: the version with causes(Z,Y) is implied by the symmetry of in/dependences.
:- existsCauses(Z,W), indep(X,Y,W), dep(X,Y,U), U==W+2**(Z-1), set(W), set(U), node(X),
 node(Y), node(Z), not ismember(W, Z), not ismember(W, X), not ismember(W,Y), 
 X != Y, Y != Z, X != Z.

%%%%% ACI rules:
%%% Rule 1: X || Y | U and X -/-> U => X -/->Y
:- causes(X,Y), indep(X,Y,U), not existsCauses(X,U), node(X), node(Y), set(U), X != Y, 
not ismember(U,X), not ismember(U,Y).

%%% Rule 2: X || Y | W u [Z] => X |/| Z | W 
dep(X,Z,W) :- indep(X,Y,W), dep(X,Y,U), U==W+2**(Z-1), set(W), set(U), 
node(X), node(Y), node(Z), X != Y, Y != Z, X != Z, not ismember(W,X), not ismember(W,Y).

%%% Rule 3: X |/| Y | W u [Z] => X |/| Z | W
dep(X,Z,W) :- dep(X,Y,W), indep(X,Y,U), U==W+2**(Z-1), set(W), set(U), 
node(X), node(Y), node(Z), X != Y, Y != Z, X != Z, not ismember(W,X), not ismember(W,Y).

%%% Rule 4: X || Y | W u [Z] and X || Z | W u U => X || Y | W u U
indep(X,Y,A) :- dep(X,Y,W), indep(X,Y,U), U==W+2**(Z-1), indep(X,Z,A), A==W+2**(B-1),
 set(W), set(U), not ismember(W,X), not ismember(W,Y), node(X), node(Y), node(Z), 
 set(A), node(B), X!=B, Y!=B, Z!=B, X != Y, Y != Z, X != Z.

%%% Rule 5: Z |/| X | W and Z |/| Y | W and X || Y | W => X |/| Z | W u Z
dep(X,Y,U) :- dep(Z,X,W), dep(Z,Y,W), indep(X,Y,W), node(X), node(Y), U==W+2**(Z-1),
 set(W), set(U), X != Y, Y != Z, X != Z, not ismember(W,X), not ismember(W,Y).

%%%%% Loss function and optimization.
%%% Define the loss function as the incongruence between the input in/dependences 
%%% and the in/dependences of the model.
fail(X,Y,Z,W) :- dep(X,Y,Z), input_indep(X,Y,Z,W).
fail(X,Y,Z,W) :- indep(X,Y,Z), input_dep(X,Y,Z,W).

%%% Include the weighted ancestral relations in the loss function.
fail(X,Y,-1,W) :- causes(X,Y), wnotcauses(X,Y,W), node(X), node(Y), X != Y.
fail(X,Y,-1,W) :- not causes(X,Y), wcauses(X,Y,W), node(X), node(Y), X != Y.

%%% Optimization part: minimize the sum of W of all fail predicates that are true.
#minimize{W,X,Y,C:fail(X,Y,C,W) }.
\end{lstlisting}
\end{table*} 

\section{Open source code repository}
We provide an open-source version of our algorithms and the evaluation framework, which can be easily extended, at \url{http://github.com/caus-am/aci}.

\clearpage
\bibliographystyle{abbrv}
\bibliography{biblio}

\end{document}